\def\eqref#1{equation~\ref{#1}}
\def\1{\bm{1}}
\def\vv{{\bm{v}}}
\def\vx{{\bm{x}}}
\def\vy{{\bm{y}}}
\DeclareMathAlphabet{\mathsfit}{\encodingdefault}{\sfdefault}{m}{sl}
\SetMathAlphabet{\mathsfit}{bold}{\encodingdefault}{\sfdefault}{bx}{n}
\DeclareMathOperator*{\argmax}{arg\,max}
\DeclareMathOperator*{\argmin}{arg\,min}
\newtheorem{assumption}{Assumption}
\newtheorem{exmp}{Example}[section]
\newenvironment{sproof}{%
  \proof}{\endproof}
\newtheorem{lemma}{Lemma}
\renewcommand{\algorithmiccomment}[1]{
    \State $\triangleright$ \textit{\small{#1}}
}
\newcommand{\compt}{\ensuremath{t_{\text{score}}}}
\newcommand{\compp}{\ensuremath{t_{\text{parseval}}}}
\newcommand{\probeacc}{\ensuremath{p_{\text{parseval}}}}
\newcommand{\ours}{\ensuremath{\mathrm{TreeProjections}}}
\newcommand{\cvec}[2]{\ensuremath{\vv^{#1}_{#2}}}
\newcommand{\cfvec}[1]{\ensuremath{\tilde{\vv}_{#1}}}
\newcommand{\treeo}[1]{\ensuremath{\widehat{T}_{\text{proj}}(#1)}}
\newcommand{\err}{\ensuremath{\mathcal{L}}}
\newcommand{\senti}[1]{\ensuremath{S^{(i)}}}
\newcommand{\spn}{\ensuremath{p}}
\newcommand{\tree}[1]{\ensuremath{T(#1)}}
\newcommand{\sci}{\ensuremath{\mathrm{SCI}}}
\newcommand{\norm}[1]{\lVert #1 \rVert}
\newcommand{\cdist}[2]{\frac{#1^{\top}{#2}}{\lVert #1 \rVert \lVert #2 \rVert}}
\newcommand{\lbranch}{\ensuremath{\mathrm{LBranch}}}
\newcommand{\rbranch}{\ensuremath{\mathrm{RBranch}}}
\newcommand{\random}{\ensuremath{\mathrm{Random}}}
\newif\ifcomments
\definecolor{CMpurple}{rgb}{0.6,0.18,0.64}
\newcommand\cm[1]{\textcolor{CMpurple}{\textsf{\scriptsize[\textbf{CM\@:} #1]}}}
\newcommand\cmi[1]{\textcolor{CMpurple}{#1}}
\newcommand\cmm[1]{\marginpar{\raggedright\tiny\textcolor{CMpurple}{\textsf{{\bfseries CM\@:} #1}}}}
\newcommand\cms{\bgroup\markoverwith{\textcolor{CMpurple}{\rule[.4ex]{2pt}{0.8pt}}}\ULon}
\newcommand\sm[1]{\textcolor{blue}{\textsf{\scriptsize[\textbf{SM\@:} #1]}}}
\newcommand\sms{\bgroup\markoverwith{\textcolor{blue}{\rule[.4ex]{2pt}{0.8pt}}}\ULon}
\newcommand\ps[1]{\textcolor{teal}{\textsf{\scriptsize[\textbf{ps\@:} #1]}}}
\newcommand\pss{\bgroup\markoverwith{\textcolor{teal}{\rule[.4ex]{2pt}{0.8pt}}}\ULon}
\newcommand\cm[1]{}
\newcommand\cmi[1]{\ignorespaces}
\newcommand\cmm[1]{}
\newcommand\cms[1]{#1}
\newcommand{\ps}[1]{}
\newcommand{\sm}[1]{}
\title{Characterizing intrinsic compositionality in transformers with Tree Projections}
\author{
\textbf{Shikhar Murty}\textsuperscript{$\dagger$}
\hspace{.1cm}\textbf{Pratyusha Sharma}\textsuperscript{$\ddagger$} \hspace{.1cm} \textbf{Jacob Andreas }\textsuperscript{$\ddagger$} \hspace{.1cm} \textbf{Christopher D. Manning}\textsuperscript{$\dagger$} \\
\textsuperscript{$\dagger$}Computer Science Department, Stanford University\quad
  \textsuperscript{$\ddagger$}MIT CSAIL\\
  \texttt{\{smurty, manning\}@cs.stanford.edu, \{pratyusha, jda\}@mit.com} 
}
\begin{document}

\maketitle

\begin{abstract}
When trained on language data, do transformers learn some arbitrary computation that utilizes the full capacity of the architecture or do they learn a simpler, tree-like computation, hypothesized to underlie compositional meaning systems like human languages? 
There is an apparent tension between compositional accounts of human language understanding, which are based on a restricted bottom-up computational process, 
and the enormous success of neural models like transformers, which can route information arbitrarily between different parts of their input. 
One possibility is that these models, while extremely flexible in principle, in practice learn to interpret language hierarchically, ultimately building sentence representations close to those predictable by a bottom-up, tree-structured model.
To evaluate this possibility, we describe an unsupervised and parameter-free method to \emph{functionally project} the behavior of any transformer into the space of tree-structured networks. Given an input sentence, we produce a binary tree that approximates the transformer's representation-building process and a score that captures how ``tree-like'' the transformer's behavior is on the input. While calculation of this score does not require training any additional models, it provably upper-bounds the fit between a transformer and any tree-structured approximation.
Using this method, we show that transformers for three different tasks become more tree-like over the course of training, in some cases unsupervisedly recovering the same trees as supervised parsers. These trees, in turn, are predictive of model behavior, with more tree-like models generalizing better on tests of compositional generalization.

\end{abstract}

\section{Introduction}

Consider the sentence \textit{Jack has more apples than Saturn has rings}, which you have almost certainly never encountered before. Such \emph{compositionally novel} sentences consist of known words in unknown contexts, and can be reliably interpreted by humans.
One leading hypothesis suggests that humans process language according to hierarchical tree-structured computation and that such a restricted computation is, in part, responsible for compositional generalization. Meanwhile, popular neural network models of language processing such as the transformer can in principle, learn an arbitrarily expressive computation over sentences, with the ability to route information between any two pieces of the sentence. In practice, when trained on language data, do transformers instead constrain their computation to look equivalent to a tree-structured bottom-up computation?

While generalization tests on benchmarks \citep[among others]{lake2018generalization, bahdanau2019systematic, hupkes2019comp, kim2020cogs} assess if a transformer's \emph{behavior} is aligned with tree-like models, they do not measure if the transformer's \emph{computation} is tree-structured, largely because model behavior on benchmarks could entirely be due to orthogonal properties of the dataset \citep{patel2022revisiting}. Thus, to understand if transformers implement tree-structured computations, the approach we take is based on \emph{directly approximating them} with a separate, tree-structured computation. Prior methods based on this approach \citep{andreas2019measuring, mccoy2019rnns} require putatively gold syntax trees, which not only requires committing to a specific theory of syntax, but crucially, may not exist in some domains due to syntactic indeterminacy. Consequently, these methods will fail to recognize a model as tree-like if it is tree-structured according to a different notion of syntax. Moreover, all of these approaches involve an expensive training procedure for explicitly fitting a tree-structured model \citep{socher2013recursive, smolensky1990tpr} to the neural network.

Instead, we present a method that is completely unsupervised (no gold syntax needed) and parameter-free (no neural network fitting needed). At a high level, our proposed method \emph{functionally projects}\footnote{Our method provides a \emph{functional} account of the transformer's computation and not a \emph{topological} account, i.e., we are agnostic to whether the attention patterns of the transformer themselves look tree structured. See Appendix~\ref{sec:appendix_topological} for examples.} transformers into the space of all tree-structured models, via an implicit search over the joint space of tree structures and parameters of corresponding tree-structured models (Figure~\ref{fig:overview}). The main intuition behind our approach is to appeal to the notion of \emph{representational invariance}: bottom-up tree-structured computations over sentences build intermediate representations that are invariant to outside context, and so we can approximate transformers with a tree-structured computation by searching for a ``bracketing'' of the sentence where transformer representations of intermediate brackets are maximally invariant to their context. Concretely, the main workhorse of our approach is a subroutine that computes distances between contextual and context-free representations of all spans of a sentence. We use these distances to induce a \emph{tree projection} of the transformer using classical chart parsing (Section~\ref{sec:our_approach}), along with a score that estimates tree-structuredness.

First, we prove that our approach can find the \emph{best} tree-structured account of a transformer's computation under mild assumptions (Theorem~\ref{thm:1}). Empirically, we find transformer encoders of varying depths become more tree-like as they train on three sequence transduction datasets, with corresponding tree projections gradually aligning with gold syntax on two of three datasets (Section~\ref{sec:exp_training_dynamics}). Then, we use tree projections as a tool to predict behaviors associated with compositionality: induced trees reliably reflect contextual dependence structure implemented by encoders (Section~\ref{sec:exp_independences}) and both tree scores as well as parsing F1 of tree projections better correlate with compositional generalization to configurations unseen in training than in-domain accuracy on two of three datasets (Section~\ref{sec:exp_generalization}).

\section{Background}

\begin{figure}[t]

\includegraphics[width=\linewidth]{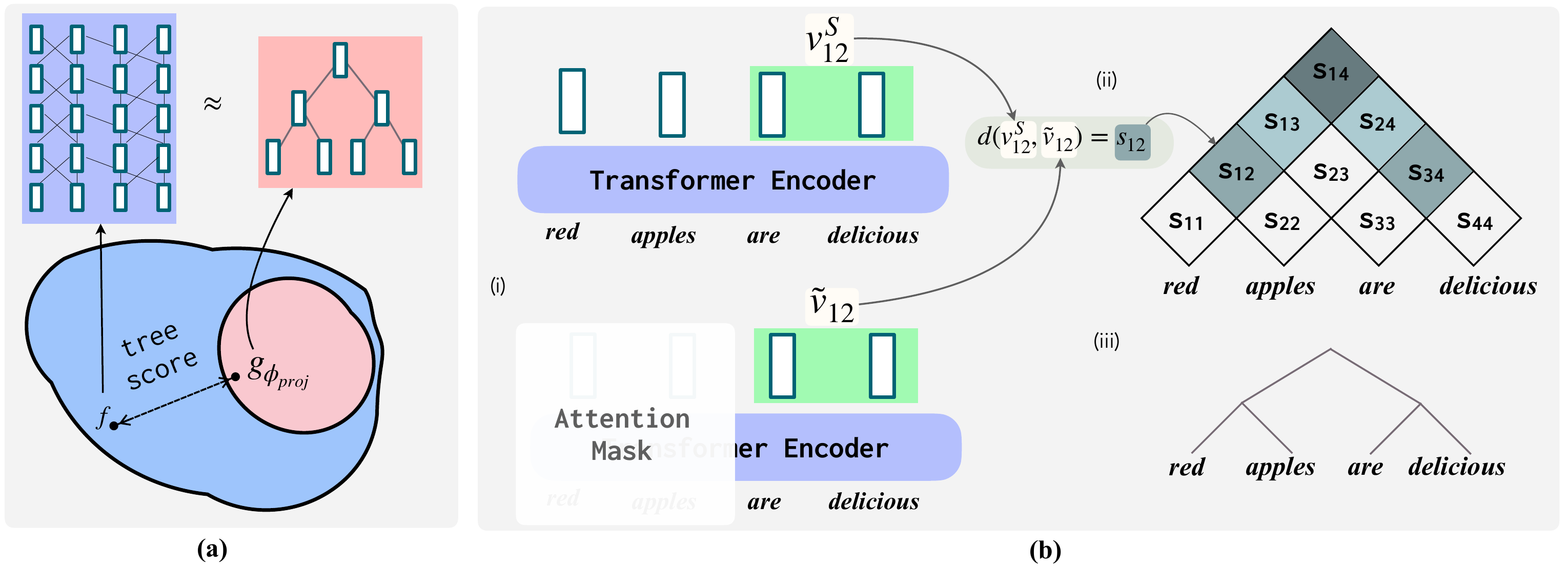}

\caption{(a) Given a transformer model $f$, our method finds the \emph{tree projection} of $f$ i.e., binary trees corresponding to the tree-structured neural network $g_{\phi_{\text{proj}}}$ (in the space of all tree-structured models) that best approximates the outputs of $f$ on a given set of strings. (b) (i) Given a string, we compute  context-free representations ($\tilde{\vv}_{ij}$) for all spans of the string via attention masking (Section~\ref{sec:our_approach}). (ii) We use the distance between (average-pooled) context-free and contextual representations ($\vv_{ij}$) to populate a chart data structure. (iii) We decode a tree structure from chart entries.}
\label{fig:overview}
\end{figure}

How can we compute the meaning of \textit{red apples are delicious}?  Substantial evidence \citep{crain1987structure, pallier2011cortical, hale2018finding} supports the hypothesis that semantic interpretation of sentences by humans involves a \emph{tree-structured}, hierarchical computation, where smaller constituents (\textit{red},  \textit{apples}) recursively combine into larger constituents (\textit{red apples}), until we reach the full sentence. Concretely, suppose we have a sentence $S \triangleq \{w_1, w_2, \ldots, w_{|S|}\}$. Let  $T$ be a function that returns a binary tree for any sentence $S$, defined recursively as $\tree{S} \triangleq \langle \tree{S_{1,j}}, \tree{S_{j+1, |S|}} \rangle$ where $\tree{S_{a,b}}$ refers to a subtree over the span $S_{a,b} \triangleq \{w_a, w_{a+1}, \ldots, w_{b}\}$. We say that a span $S_{a, b} \in \tree{S}$ if the node $\tree{S_{a,b}}$ exists as a subtree in $\tree{S}$. For notational convenience, we sometimes use $S_l$ and $S_r$ as the left and right subtrees for $T(S)$ i.e., $\tree{S} = \langle S_l, S_r \rangle$.

\paragraph{Compositionality in Meaning Representations.} While theories of compositional meaning formation might differ on specifics of syntax, at a high-level, they propose that computing the meaning of $S$ must involve a bottom-up procedure along some syntax tree $\tree{S}$ of the sentence $S$. Formally, we say that a meaning representation system $m$ is compositional if the meaning $m(s)$ of some expression $s$ is a \emph{homomorphic image} of the syntax of $s$ i.e., $m(s) = \phi(m(s_l), m(s_r))$ for some $\phi$ following \cite{montague1970universal}. Crucially, we note that such a $\phi$ exists only if $m(s)$ can be fully determined by the contents of $s$, that is, if $m(s)$ is \emph{contextually invariant}. While there are several phenomena that necessarily require a non-compositional context-sensitive interpretation (indexicals, idioms, pronouns, lexical ambiguity among others), compositional interpretation remains a central component in explanations of the human ability to systematically interpret novel sentences.
\paragraph{Compositionality in Neural Models.} A class of neural networks that are obviously compositional are tree-structured models such as \cite{socher2013recursive}, that obtain \emph{vector representations} of sentences by performing a bottom-up computation over syntax. Specifically, given $S$ and a corresponding binary tree $\tree{S}$, the output of the tree-structured network $g_\phi$ is defined recursively---for any span $\spn \in \tree{S}$, $g_\phi(\spn, T(\spn)) \triangleq h_{\theta}(g_\phi(\spn_l, \tree{\spn_l}), g_\phi(\spn_r, \tree{\spn_r})$ where $h_{\theta}: \mathbb{R}^{d} \times \mathbb{R}^{d} \mapsto \mathbb{R}^{d}$ is some feedforward neural network. For leaf nodes $w_i$, $g_\phi(w_i, T(w_i)) \triangleq \eta_{w_i}$, where $\eta_{w} \in \mathbb{R}^{d}$ represents the word embedding for $w$. The parameters of the network are $\phi = \{\theta, \eta_{w_1}, \eta_{w_2}, \ldots \}$.

\section{Our Approach}
\label{sec:our_approach}

While tree-structured networks were built to reflect the compositional structure of natural language, they have been superseded by relatively unstructured transformers \citep{vaswani2017attention}. How can we measure if the \emph{computation} implemented by a transformer is compositional and tree-like? We start by noting that in any bottom-up tree computation over a sentence, representation of an intermediate constituent depends only on the span it corresponds to, while being fully invariant to outside context. Thus, one way to assess tree-structuredness of a computation over some span is to measure \emph{contextual invariance} of the resulting representation. Consequently, we construct a tree-structured approximation of a transformer's computation over a sentence by searching for a bracketing of the sentence where spans have maximal contextual invariance.

\subsection{Span Contextual Invariance}
\begin{wrapfigure}[19]{r}{0.35\textwidth}
  \vspace{-1em}
  \centering 
  \includegraphics[width=0.3\textwidth]{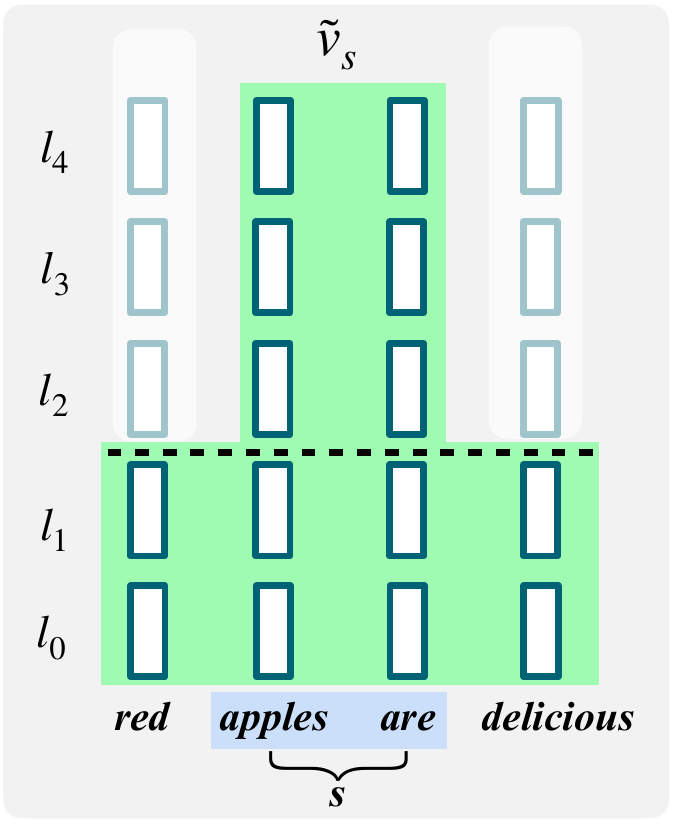}
  \vspace{-0.5em}
  \caption{We use a T-shaped attention mask with a threshold layer to obtain approximate context-free vectors for transformers. }
  \label{fig:context_free}
\end{wrapfigure}

Suppose $f$ is a  transformer model that produces contextual vectors of words in $S$ as $f(S) \triangleq \{\cvec{S}{w_1}, \cvec{S}{w_2}, \ldots, \cvec{S}{w_{|S|}}\}$ where $\cvec{S}{w}$ is a contextual vector representation of $w$. Given a span $\spn$, let $\cvec{S}{\spn}$ be the span representation of the contextual vectors of words in $\spn$, $\cvec{S}{\spn} = \sum_{w \in \spn} \cvec{S}{w}$. Similarly, let $\tilde{\vv}_{\spn}$ be a \emph{context-free} representation of the span $\spn$. For transformers, we obtain context-free representations through a simple attention masking scheme. In particular, to obtain $\tilde{\vv}_{\spn}$, we apply a ``T-shaped'' attention mask and take the pooled representation of the words in $\spn$ at the final layer (Figure~\ref{fig:context_free}). The mask ensures that attention heads do not attend to tokens outside of $\spn$ \emph{after an optional threshold layer}\footnote{This procedure outputs vectors that are entirely context-free only if the threshold is exactly 0, but we find that tuning the threshold layer often leads to significantly better induced parses.} 

We define span contextual invariance (SCI) of a span $\spn$ in the sentence $S$ as $\sci{}(S, \spn) \triangleq d(\cvec{S}{\spn}, \tilde{\vv}_{\spn})$ for some distance function $d$. Similarly, we define the cumulative $\sci{}$ score for a tree $T$ to be:
\begin{align}
    \sci{}(S, T) \triangleq \sum_{s \in T} d(\cvec{S}{\spn}, \tilde{\vv}_{\spn}).
\end{align}

\subsection{Computing Tree Projections by minimizing SCI}

Consider the collection of strings, $\mathcal{D} = \{(S)\}$, and some function $T$ that produces binary trees for any $S \in \mathcal{D}$. The cumulative error from approximating outputs of the transformer $f$ with outputs of a tree-structured network $g_{\phi}$ structured according to $T$ can be written as
\begin{align}
 \err(f, g_{\phi}, T) \triangleq  \sum_{S \in \mathcal{D}} \sum_{\spn \in \tree{S}} d(g_{\phi}(\spn, \tree{\spn}), \cvec{S}{\spn}).
\end{align}

Suppose we are interested in finding the best tree-structured approximation to $f$ over all possible trees i.e. a configuration of tree structures and corresponding model parameters that best approximate the transformer's behavior. We define this as the \emph{exact tree projection} of $f$,
\begin{align}
    \phi_{\text{proj}}, T_{\text{proj}} \triangleq \argmin_{\phi,T} \err(f, g_\phi, T).
\end{align} 

\begin{restatable}{theorem}{upperboundall}
\label{thm:1}
$\min_{\phi, T} \err(f, g_\phi, T) \leq \sum_{S \in \mathcal{D}} \min_{T(S)} \sci{}(S, \tree{S})$. In other words, the best tree structured approximation to $f$ has an error upper bounded by cumulative $\sci{}$ scores.
\end{restatable}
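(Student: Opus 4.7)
The plan is to exhibit a specific pair $(\phi^*, T^*)$ whose error is at most the right-hand side, and then note that $\min_{\phi, T} \err(f, g_\phi, T) \leq \err(f, g_{\phi^*}, T^*)$. For each $S \in \gD$, pick a minimizer $T^*_S \in \argmin_T \sci{}(S, \tree{S})$, and let $T^*$ denote the function assigning $T^*_S$ to $S$. My candidate $g_{\phi^*}$ is engineered so that, for every span $\spn$ that appears as a subtree of some $T^*_S$, the recursive computation satisfies $g_{\phi^*}(\spn, T^*(\spn)) = \cfvec{\spn}$, i.e., it reproduces the context-free representation yielded by the T-shaped attention mask on the original transformer.

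Concretely, I would set the leaf embeddings $\eta_{w_i} \triangleq \cfvec{\{w_i\}}$ and choose the composition network $h_{\theta^*}$ so that $h_{\theta^*}(\cfvec{\spn_l}, \cfvec{\spn_r}) = \cfvec{\spn}$ whenever $\spn = \spn_l \cup \spn_r$ arises as an internal node of some $T^*_S$. Only finitely many child-child-parent triples of context-free vectors need to be fit across the corpus, so a sufficiently expressive feedforward $h_{\theta^*}$ can realize this map; this is the ``mild assumption'' the theorem invokes (universal approximation of $h_\theta$, or equivalently interpreting $\min_\phi$ as an infimum over feedforward networks). A straightforward induction on the depth of $T^*_S$ then gives $g_{\phi^*}(\spn, T^*(\spn)) = \cfvec{\spn}$ for every span in every chosen tree.

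With this configuration, each per-span contribution to the approximation error is $d(g_{\phi^*}(\spn, T^*(\spn)), \cvec{S}{\spn}) = d(\cfvec{\spn}, \cvec{S}{\spn}) = \sci{}(S, \spn)$. Summing across spans of $T^*_S$ and across sentences yields $\err(f, g_{\phi^*}, T^*) = \sum_{S \in \gD} \sci{}(S, T^*_S) = \sum_{S \in \gD} \min_{\tree{S}} \sci{}(S, \tree{S})$, and the conclusion follows because the left-hand side of the theorem is at most $\err(f, g_{\phi^*}, T^*)$.

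The main obstacle is the realizability of $h_{\theta^*}$: if two distinct parent spans occurring in the chosen trees decompose into children with identical context-free vectors but \emph{different} parent context-free vectors, no deterministic composition function can hit both targets simultaneously. I would discharge this either by packaging it into the theorem's ``mild assumption'' (the child pairs appearing across $\{T^*_S\}$ uniquely determine their parents, so a finite-interpolation argument suffices), or by perturbing the leaf embeddings by an arbitrarily small amount to break ties and then invoking universal approximation; either formalization closes the gap and makes the inequality rigorous.
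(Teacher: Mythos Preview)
Your approach is essentially the same as the paper's: both arguments exhibit a tree network $g_{\hat\phi}$ whose recursive outputs coincide with the transformer's context-free span vectors $\cfvec{\spn}$, so that the approximation error collapses to the cumulative $\sci$ score, and then conclude by taking an infimum. The paper packages this slightly differently---it first proves the bound for an \emph{arbitrary} fixed tree map $T$ (Lemma~\ref{lemm:1}) and only then minimizes over $T$, whereas you fix the per-sentence $\sci$-minimizing trees $T^*_S$ up front---but the substantive step is identical.

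One small correction: the ``mild assumption'' you invoke is not part of this theorem. In the paper, Assumption~\ref{assump:1} is used only for the \emph{corollary} (equality and exact recovery of $T_{\text{proj}}$); for the inequality of Theorem~\ref{thm:1} the paper simply asserts that a $\hat\phi$ with $g_{\hat\phi}(s,T(s))=\cfvec{s}$ exists ``for large enough $p$'', i.e., it treats realizability of $h_\theta$ on the finite set of child--parent triples as automatic. Your discussion of the collision obstruction (distinct parents sharing identical child pairs) is in fact more careful than the paper's own justification, but you should frame it as an implicit expressivity assumption on $h_\theta$ rather than as the paper's stated ``mild assumption''.
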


In general, finding tree projections involves a joint search over all discrete tree structures $T(S)$ as well as over continuous parameters $\phi$, which is intractable. However, we substantially simplify this search using Theorem~\ref{thm:1}, since the upper bound depends only on parses $\tree{S}$ and properties of the transformer, and can be exactly minimized for a given $f$ in polynomial time, with efficient parsing algorithms. We minimize this upper bound itself to approximately recover the best tree-structured approximation to $f$, over all choices of trees and parameters. The output of this minimization is an approximate tree projection,
\begin{align}
 \treeo{S} = \argmin_{T(S)} \sci{}(S, T(S))
\end{align}
for every $S \in \mathcal{D}$. Under a mild assumption\footnote{ Figure~\ref{fig:dist_opt_vs_ours} in the Appendix shows that this assumption approximately holds in practice.}, $\sci{}$ minimization leads to tree projections \emph{exactly}.

\begin{assumption}
\label{assump:1}
Let $S_\spn$ denote the collection of sentences that contain the span $\spn$. Then, for every span $\spn$, we have $\min_{\vv} \sum_{S \in S_\spn} d(\cvec{S}{\spn}, \vv) = \sum_{S \in S_\spn} d(\cvec{S}{\spn}, \tilde{\vv}_\spn)$. That is, context-free vectors minimize the cumulative distance to their contextual counterparts.
\end{assumption}

\begin{restatable}{corollary}{equalcorr}
\label{corr:1}
Under Assumption~\ref{assump:1}, $\min_{\phi, T} \err(f, g_\phi, T) = \sum_{S \in \mathcal{D}} \min_{T(S)} \sci{}(S, \tree{S})$. Moreover, $T_{\text{proj}}(S) = \argmin_{T(S)} \sci{}(S, T(S))$ for any $S \in \mathcal{D}$.
\end{restatable}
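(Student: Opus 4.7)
The plan is to establish the matching lower bound to Theorem~\ref{thm:1}, so that the two inequalities together pin down $\min_{\phi,T} \err(f,g_\phi,T)$ exactly, and then to read off the optimal trees from the equality case. Since the theorem already gives the upper bound $\min_{\phi,T} \err(f,g_\phi,T) \leq \sum_S \min_{T(S)} \sci{}(S, T(S))$, all the work is in the reverse direction.

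First I would fix an arbitrary $(\phi, T)$ and regroup the error by span rather than by sentence. The key structural fact to exploit is that $g_\phi(\spn, T(\spn))$ is a single vector depending only on $\spn$ and its subtree---not on the enclosing sentence $S$---so writing $\vv_\spn \triangleq g_\phi(\spn, T(\spn))$ gives
\begin{align*}
\err(f, g_\phi, T) \;=\; \sum_{S} \sum_{\spn \in T(S)} d(\vv_\spn, \cvec{S}{\spn}) \;=\; \sum_{\spn} \sum_{S:\, \spn \in T(S)} d(\vv_\spn, \cvec{S}{\spn}).
\end{align*}
Next I would lower-bound each outer summand by its minimum over the choice of $\vv_\spn$ and invoke Assumption~\ref{assump:1} to identify that minimizer as $\tilde{\vv}_\spn$, producing
\begin{align*}
\err(f, g_\phi, T) \;\geq\; \sum_{\spn} \sum_{S:\, \spn \in T(S)} d(\tilde{\vv}_\spn, \cvec{S}{\spn}) \;=\; \sum_{S} \sci{}(S, T(S)).
\end{align*}
Minimizing both sides over $(\phi, T)$ and combining with Theorem~\ref{thm:1} then delivers the claimed equality.

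For the argmin claim, I would observe that the lower bound above is attained whenever a tree-structured $g_\phi$ realizes $\vv_\spn = \tilde{\vv}_\spn$ for every span in the chosen parse---exactly the construction underlying Theorem~\ref{thm:1}. Hence $\min_\phi \err(f, g_\phi, T) = \sum_S \sci{}(S, T(S))$ for each fixed $T$, and the remaining minimization over $T$ decouples sentence by sentence, giving $T_{\text{proj}}(S) = \argmin_{T(S)} \sci{}(S, T(S))$ as required.

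The main obstacle I expect is a mild index-set mismatch: Assumption~\ref{assump:1} is stated as a minimization over all of $S_\spn$, whereas the decomposition above produces sums over the narrower set $\{S \in \mathcal{D}: \spn \in T(S)\}$ determined by the chosen parse. The natural resolution is to read the assumption as applying uniformly to any sub-collection of $S_\spn$---i.e., $\tilde{\vv}_\spn$ is the cumulative-distance minimizer with respect to any such subset---which is evidently the way it is meant to be used. A secondary technicality is confirming that a tree-structured $g_\phi$ is expressive enough to realize the assignment $\spn \mapsto \tilde{\vv}_\spn$ along any given $T$, which is the same implicit expressivity hypothesis already required to make the upper bound in Theorem~\ref{thm:1} tight.
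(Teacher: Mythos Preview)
Your proposal is correct and follows essentially the same route as the paper: regroup the error by span, apply Assumption~\ref{assump:1} to obtain the lower bound, combine with the upper bound from Theorem~\ref{thm:1} (the paper invokes the underlying Lemma~\ref{lemm:1}) to get equality for each fixed $T$, and then decouple the minimization over $T$ sentence by sentence. You are in fact more careful than the paper in flagging the index-set mismatch between $S_\spn$ and $\{S : \spn \in T(S)\}$ and the expressivity hypothesis on $g_\phi$; the paper's proof implicitly makes both identifications without comment.
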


\subsection{Measuring Intrinsic Compositionality}
$\sci{}$ minimization provides two natural ways to measure intrinsic compositionality of $f$ on $\mathcal{D}$. To measure tree-structuredness, we use 
\begin{align}
\label{eq:compt_def}
\compt{} \triangleq \frac{\sum_{S \in \mathcal{D}} \mathbb{E}_T{\sci{}(S, T)} - \sci{}(S, \treeo{S})}{|\mathcal{D}|}, 
\end{align}
which computes the averaged $\sci{}$ score of induced trees, normalized against the expected $\sci{}$ score under a uniform distribution over trees. We find normalization to be necessary to prevent our method from spuriously assigning high tree-structuredness to entirely context-free encoders (that have high $\sci{}$ scores for \emph{all} trees). When gold syntax $T_g$ is available, we use
\begin{align}
 \compp{} \triangleq \textsc{PARSEVAL}(\widehat{T}_{\text{proj}}, T_g, \mathcal{D}),   
\end{align}
to measure bracketing F1 score (PARSEVAL; \citet{black1991procedure}) score of $\widehat{T}_{\text{proj}}$ against $T_g$ on $\mathcal{D}$.

\section{Experimental Setup}
Our experiments are organized as follows. First, we show that on 3 sequence transduction tasks, transformers of varying depths become more tree-like over the course of training, and sometimes learn tree projections that progressively evolve towards ground truth syntax. Then, we show how tree projections can be used to assess various model behaviors related to compositionality.

\paragraph{Datasets.}
  We consider three datasets (Table~\ref{tab:dat_examples}) commonly used for benchmarking compositional generalization---COGS \citep{kim2020cogs}, M-PCFGSET \citep{hupkes2019comp} and GeoQuery \citep{zelle96geoquery}. COGS consists of automatically generated sentences from a context-free grammar paired with logical forms, split into in-domain examples (for training) and a compositionally challenging evaluation set. M-PCFGSET is a slightly modified version \footnote{see Appendix~\ref{sec:appendix_dataset} for details.} of PCFGSET \citep{hupkes2019comp}, where inputs are a nested sequence of expresssions that specify a unary or binary operation over lists. The objective is to execute the function specified by the input to obtain the final list. We focus on the ``systematicity split'' for measuring compositional generalization. Finally, GeoQuery consists of natural language queries about US geography paired with logical forms. To measure compositional generalization, we use the ``query'' split from \cite{finegan-dollak2018improving}. 

\paragraph{Implementation Details.} We use greedy top down chart parsing to approximately minimize $\sci${}. In particular, we use $\sci{}$ scores for all $O(|S|^2)$ spans of a string $S$ to populate a chart data structure, which is used to induce a tree by minimizing $\sci{}$ via a top down greedy procedure (see Algorithm~\ref{alg:ours} in Appendix), similar to \cite{stern2017minimal}. Our procedure outputs a tree and simultaneously returns normalized $\sci{}$ score of the tree, computing a sampling estimate of expected $\sci{}$ score (Equation~\ref{eq:compt_def}).
We train transformer encoder-decoder models with encoders of depths \{2, 4, 6\} and a fixed decoder of depth 2. We omit 6-layer transformer results for GeoQuery as this model rapidly overfit and failed to generalize, perhaps due to the small size of the dataset. We choose a shallow decoder to ensure that most of the sentence processing is performed on the encoder side. We train for 100k iterations on COGS, 300k iterations on M-PCFGSET and 50k iterations on GeoQuery. We collect checkpoints every 1000, 2000 and 500 gradient updates and use the encoder at these checkpoints to obtain parses as well as tree scores. In all experiments, $d$ is cosine distance i.e.,  $d(\vx, \vy) = 1 - \frac{\vx^{\top}{\vy}}{\|\vx\|\|\vy\|}$. All transformer layers have 8 attention heads and a hidden dimensionality of 512. We use a learning rate of 1e-4 (linearly warming up from 0 to 1e-4 over 5k steps) with the AdamW optimizer. All accuracies refer to exact match accuracy against the gold target sequence. For all seq2seq transformers, we tune the threshold layer based on $\compp{}$.

\begin{table*}[ht]
\centering
\tiny
\begin{tabular}{@{}lll@{}} \toprule
& \textbf{Inputs} & \textbf{Outputs} \\ \midrule
\multirow{2}{*}{i.} & \textit{The ball was found}  &  {\tt ball($x_1$) AND find.theme($x_3, x_1$)}\\ 
& \textit{A cookie was blessed} & {\tt cookie($x_1$) AND bless.theme($x_3, x_1$)}\\ \midrule
\multirow{2}{*}{ii.} & \textit{copy interleave\_second reverse shift H13 C19 H9 O20} & {\tt H9 H13 O20 C19}\\
& \textit{repeat interleave\_second interleave\_first S1 E3 W3 N11 H4 Y3} & {\tt L8 E1 R13 T12 E1 T12 L8 E1 R13 T12 E1 T12}\\ \midrule
\multirow{2}{*}{iii.} & \textit{Which state has the lowest population density?} & {\tt (A, \_smallest(B, (\_state(A), \_density(A, B))))} \\
& \textit{What is the population density of Wyoming?} & {\tt (A, (\_density(B, A), \_const(B, \_stateid(wyoming))))} \\
\bottomrule
\end{tabular}
\caption{Example $(x, y)$ pairs from COGS (i), M-PCFGSET (ii) and GeoQuery (iii). See Appendix~\ref{sec:appendix_dataset} for more details on pre-processing as well as dataset statistics.}
\vspace{-1em}
\label{tab:dat_examples}
\end{table*}

\section{Trained Transformers implement a tree-like computation}
\label{sec:exp_training_dynamics}
How does intrinsic compositionality of a transformer encoder evolve during the course of training on sequence transduction tasks? To study this, we plot $\compt{}$ (\textit{how tree-like is a model?}) and $\compp{}$ (\textit{how accurate is the tree projection of a model?}) of encoder checkpoints throughout training. As a comparison, we track how well a supervised probe recovers syntax from encoders---that is, we train a $1$ layer transformer decoder to autoregressively predict linearized \emph{gold} parse trees of $S$ from transformer outputs $f(S)$ at various points of training, and measure the PARSEVAL score of probe outputs ($\probeacc{}$) on a test set. 

\paragraph{Results.} We plot $\compp{}$ and $\compt$ over the course of training in Figure~\ref{fig:dynamics}. \textit{We observe that 7/8 encoders gradually become more tree-like} i.e., increase $\compt{}$ over the course of training, with the 4 layer transformer on GeoQuery being the exception. Interestingly, we note that $\compp{}$ also increases over time for all encoders on COGS and M-PCFGSET suggesting that the \textit{tree projection of trained transformers progressively becomes more like ground-truth syntax}. In other words, all encoders trained on COGS and M-PCFGSET learn a computation that is gradually more ``syntax aware''. Can supervised probing also reveal this gradual syntactic enrichment? We plot PARSEVAL score of parse trees predicted by the probe on held out sentences ($\probeacc{}$) in  Figure~\ref{fig:probe}---while $\probeacc{}$ does improve over time on both COGS and M-PCFGSET, we observe that all checkpoints after some threshold have similar probing accuracies. We quantitatively compare gradual syntactic enrichment by computing the spearman correlation between $\compp{}$ ($\probeacc{}$) and training step and find that $\rho_{\probeacc{}}$ is significantly smaller than $\rho_{\compp{}}$ for both datasets. Interestingly, we also find that our unsupervised procedure is able to produce better trees than the \emph{supervised} probe on M-PCFGSET as observed by comparing $\probeacc{}$ and $\compp{}$. Overall, we conclude that supervised probing is unable to discover latent tree structures as effectively as our method.

\paragraph{How does supervisory signal affect compositionality?}
Could a purely self-supervised objective (i.e., no output logical form supervision) also lead to similar emergent tree-like behavior? To test this, we experiment with training the transformer encoder with a masked language modeling objective, similar to \cite{devlin2019bert} for COGS and GeoQuery. Concretely, for every $S$, we mask out 15\% of input tokens and jointly train a transformer encoder and a 1 layer feedforward network, to produce contextual embeddings from which the feedforward network can decode word identities for masked out words. As before, we collect checkpoints during training and plot both $\compp{}$ and $\compt{}$ over time in Figure~\ref{fig:mlm_dynamics}. We find that $\compp{}$ does not improve over time for any of the models. Additionally, we find that $\compt{}$ increases for all models on GeoQuery, but only for the 2 layer model on COGS. Taken together, these results suggest that under the low data regime studied here, transformers trained with a self-supervised objective do not learn tree-structured computations.

\begin{figure}%
\centering
\subfloat[Normalized Tree Scores for COGS, M-PCFGSET and GeoQuery ($\uparrow$ is better).]{\includegraphics[width=\linewidth]{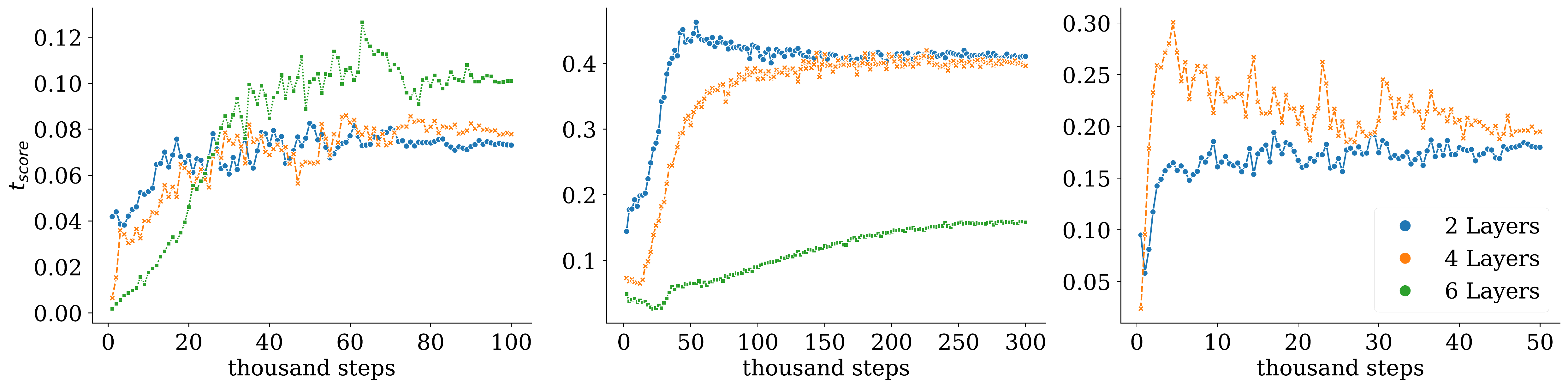}}\\
\subfloat[Parsing Accuracies for COGS, M-PCFGSET and GeoQuery ($\uparrow$ is better).]{\includegraphics[width=\linewidth]{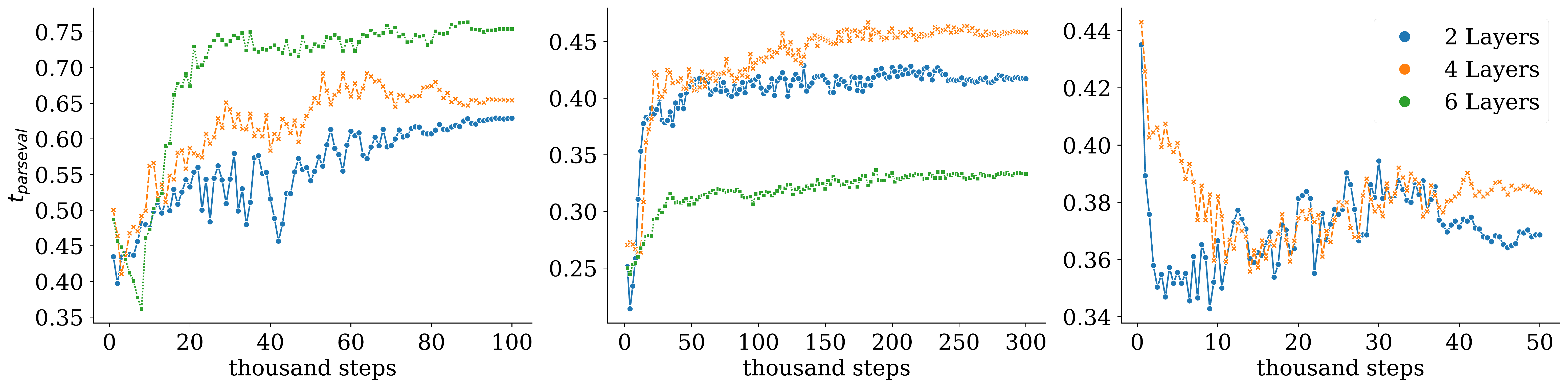}}

\caption{We plot $\compt{}$ and $\compp{}$ by computing approximate tree projections at various checkpoints. 7/8 models become more tree-structured (increased $\compt{}$) and all models on COGS and M-PCFGSET learn tree projections that gradually align with ground truth syntax (increased $\compp{}$).}
\label{fig:dynamics}
\vspace{-1em}
\end{figure}

\begin{figure}
    \centering
    \subfloat[COGS]{\includegraphics[width=0.22\linewidth]{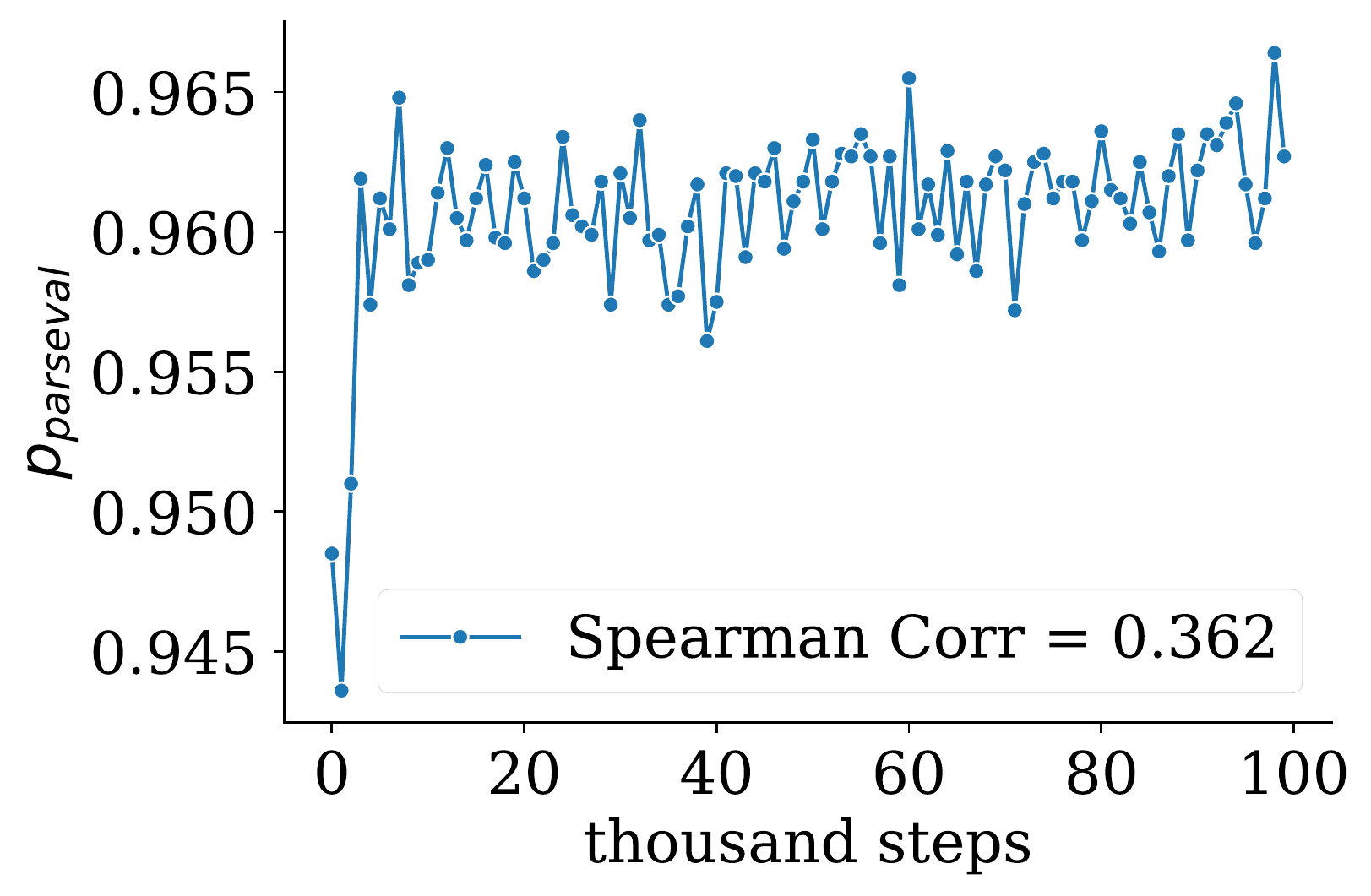}
\includegraphics[width=0.22\linewidth]{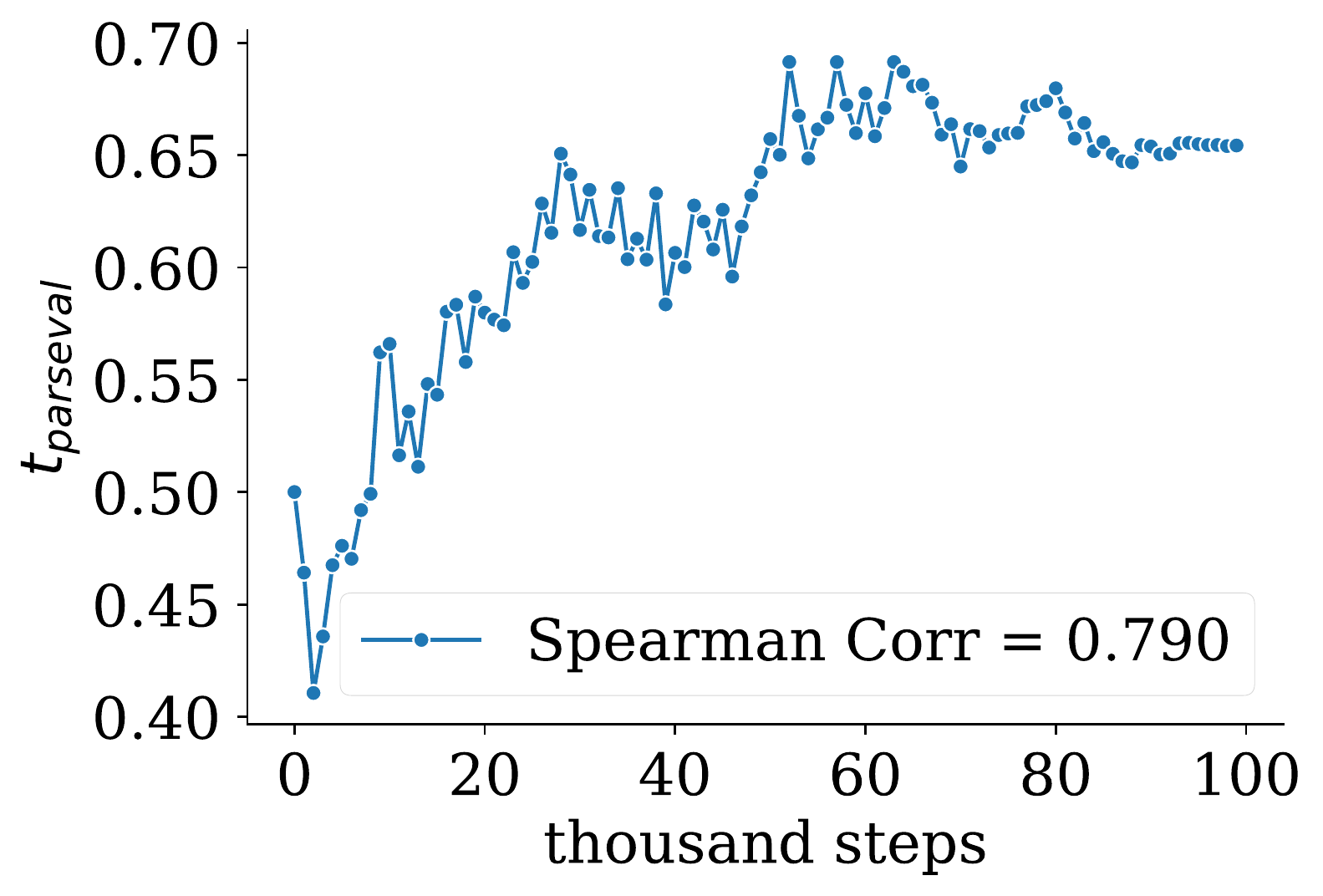}}
\qquad
\subfloat[M-PCFGSET]{\includegraphics[width=0.22\linewidth]{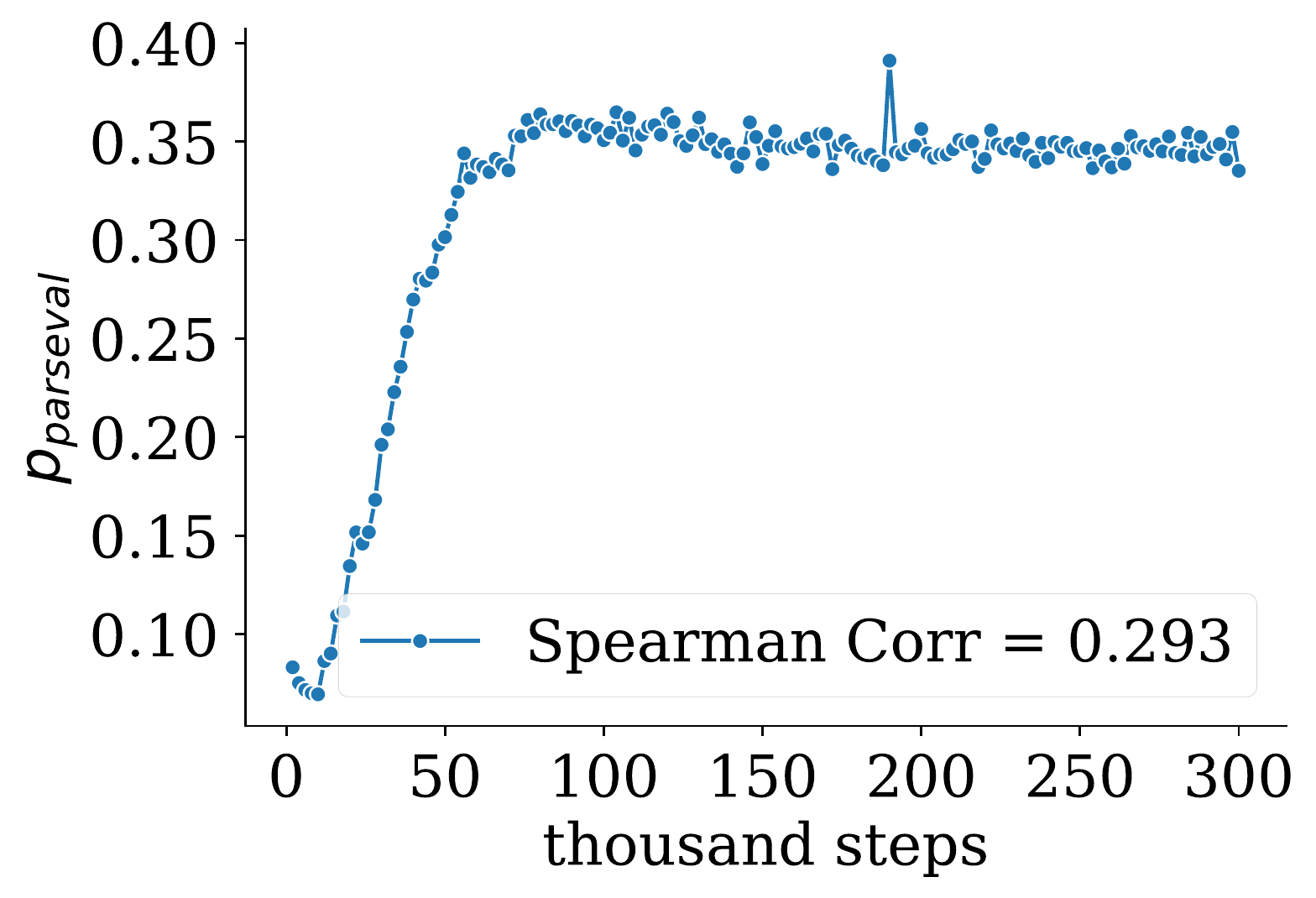}
\includegraphics[width=0.22\linewidth]{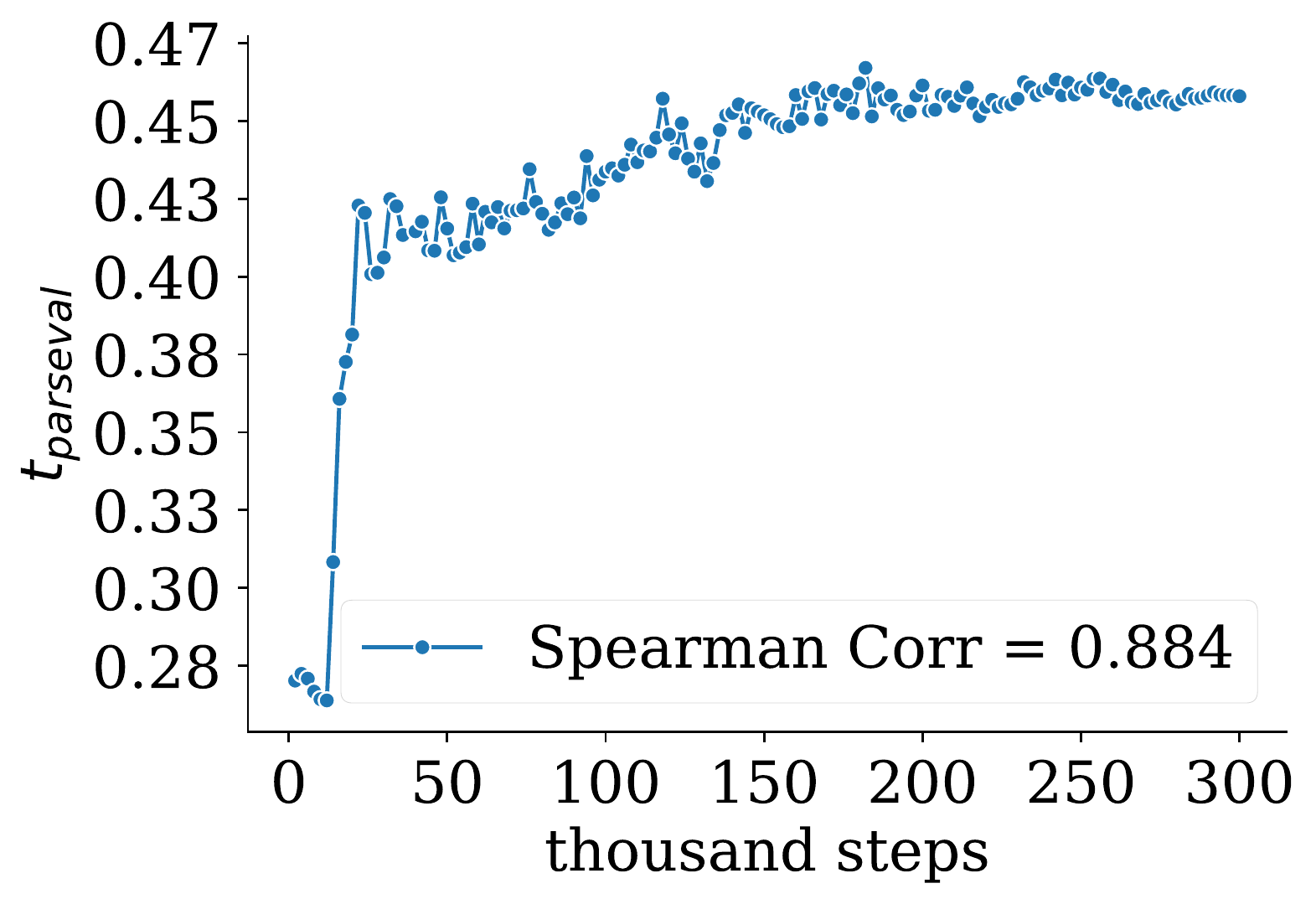}}
    \caption{We plot $\probeacc{}$ and $\compp{}$ over time for the 4 layer transformer encoder on COGS and M-PCFGSET. We find that $\compp{}$ improves gradually over time suggesting that the model becomes more ``syntax aware''. Such gradual syntax enrichment is not uncovered well by the probe since all checkpoints after 4000 (for COGS) and 50000 (for M-PCFGSET) iterations have similar $\probeacc{}$.}
    \label{fig:probe}
    \vspace{-1em}
\end{figure}

\begin{figure}
\centering
\subfloat[Parsing Accuracies]{\includegraphics[width=0.46\linewidth]{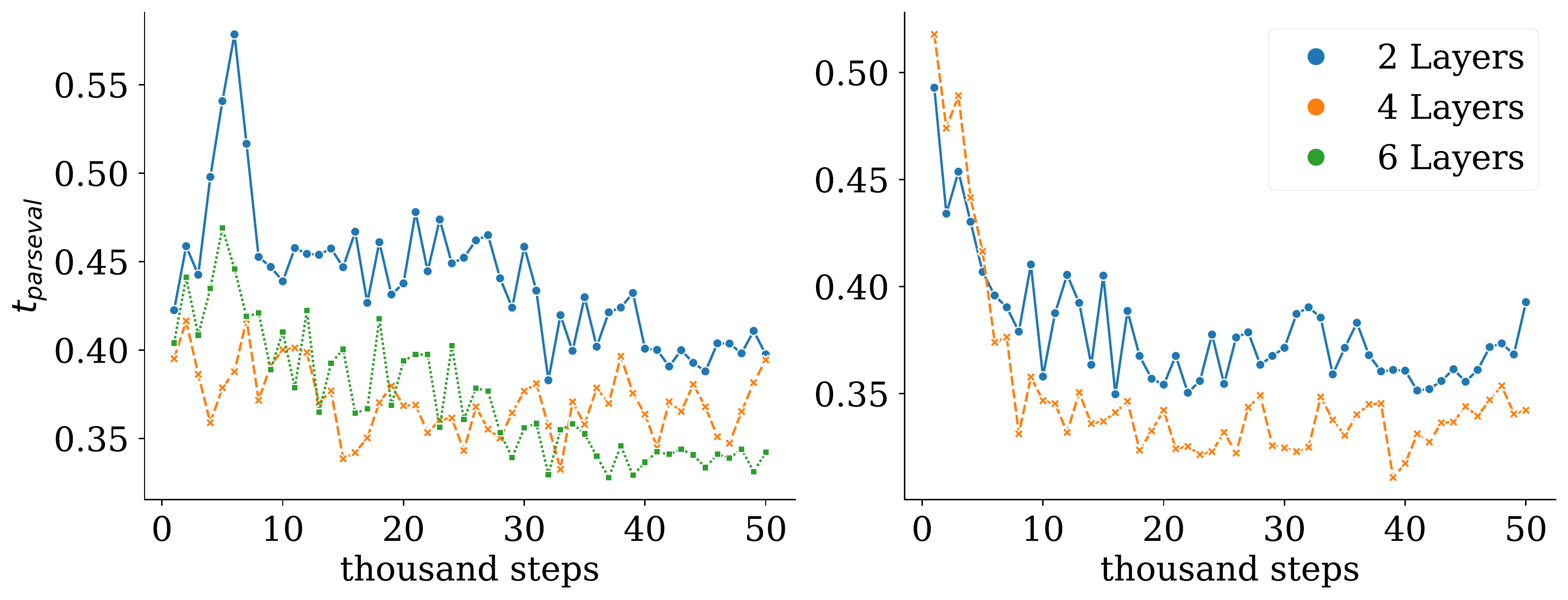}}
\qquad
\subfloat[Normalized Tree Scores]{
\includegraphics[width=0.46\linewidth]{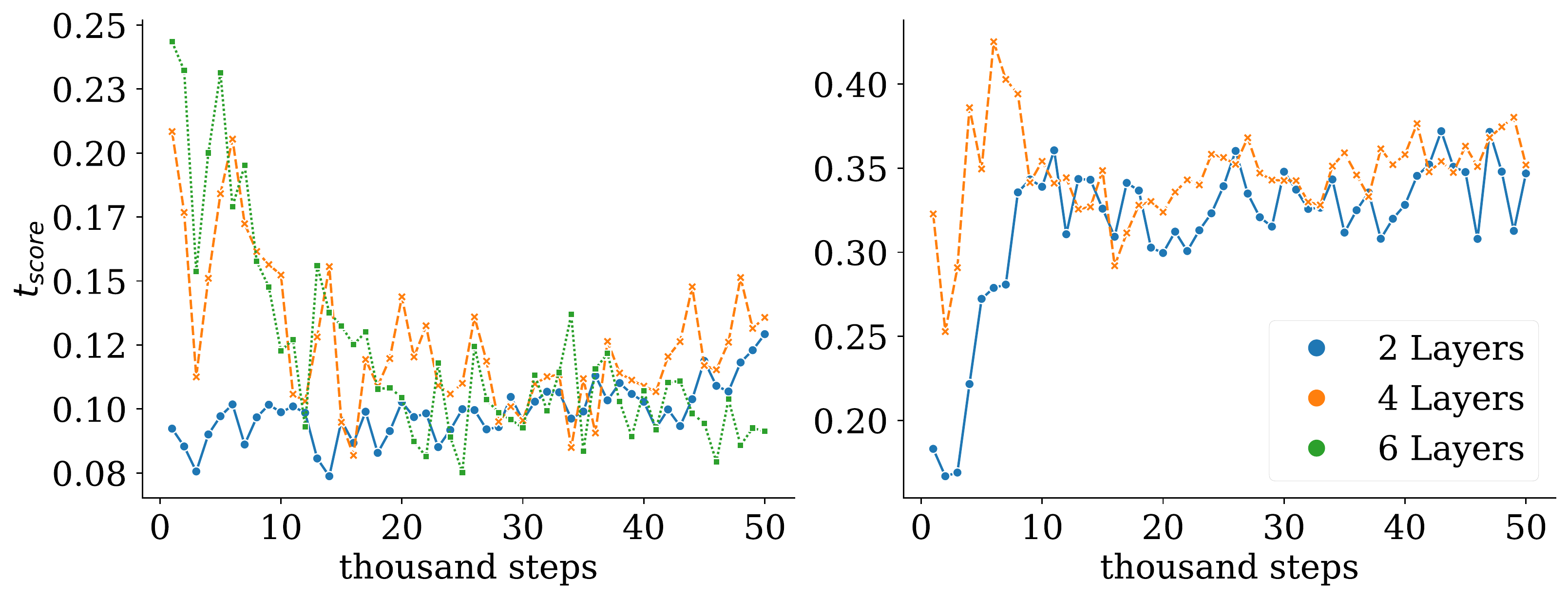}
}

\caption{We plot $\compp{}$ and $\compt{}$ at various checkpoints for models trained with a masked language modeling objective on COGS (first) and GeoQuery (second). Only 2/5 models become tree-structured and none learn tree projections aligned with gold syntax, suggesting that self-supervision may fail to produce tree-like computation in a relatively low data regime.}
\label{fig:mlm_dynamics}
\vspace{-1em}
\end{figure}

\section{Tree Projections and Model Behavior}

Given $S$, and corresponding contextual vectors $f(S)$, the \emph{contextual dependence structure} captures the dependence between contextual vectors and words in $S$ i.e., how much does $\cvec{S}{w_i}$ change when $w_j$ is perturbed to a different word. Contextual dependence structure is important for assessing compositional behavior. For instance, consider the span $\spn$ = \textit{red apples} appearing in some sentences. If the contextual vectors for $\spn$ has large dependence on outside context, we expect the model to have poor generalization to the span appearing in \emph{novel contexts} i.e., poor compositional generalization.

We first show that tree projections reflect the contextual dependence structure implemented by a transformer. Next, we show that both $\compt{}$ and $\compp{}$ are better predictors of compositional generalization than in-domain accuracy.

\subsection{Induced trees correspond to Contextual dependence structure}
\label{sec:exp_independences}

\begin{wrapfigure}{r}{0.4\textwidth}
\vspace{-1.2em}
    \centering
    \includegraphics[width=0.3\textwidth]{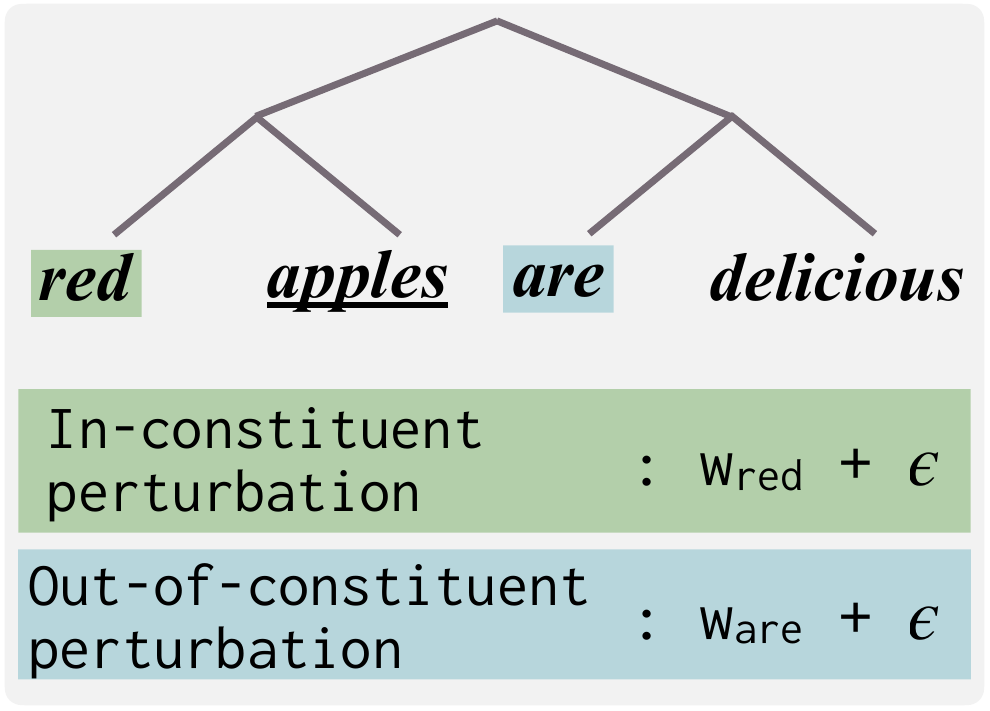}
    \vspace{-.5em}
  \caption{For word $w$ (\emph{apples}) in constituent $c$, an in-constituent perturbation adds noise $\epsilon \sim \mathcal{N}(0, 0.01)$ to another word's vector within $c$ (\emph{red}) while an out-of-constituent perturbation adds noise to a word vector at same relative distance outside $c$ (\emph{are}).
  \vspace{-2em}
  }
  \label{fig:perturb}
\end{wrapfigure}

Intuitively, greedily decoding with a $\sci{}$ populated chart makes split point decisions where resulting spans are maximally invariant with one other. Thus, for a given constituent $c$ and a word $w \in c$, we expect $\vv^{S}_w$ to depend more on words within the same constituent than words outside the constituent. Thus, we compare the change in $\vv^{S}_w$ when another word inside $c$ is perturbed (\emph{in-constituent} perturbations) to the change when a word outside $c$ is perturbed (\emph{out-of-constituent} perturbations), where word perturbations are performed by adding gaussian noise to corresponding word vectors in layer 0 (see Figure~\ref{fig:perturb}). We ensure that both perturbations are made to words at the same \emph{relative distance} from $w$. As a control, we also compute changes to $\cvec{S}{w}$ when perturbations are made with respect to constituents from random trees.

\paragraph{Setup and Results.} We sample 500 random inputs from each of COGS, M-PCFGSET and GeoQuery and consider encoders from all transformer models. We obtain the \emph{mean} $L_2$ distance between the contextual vector of $w$ in the original and perturbed sentence for in-constituent perturbations ($\Delta_{ic}$) and out-of-constituent perturbations ($\Delta_{oc}$) and plot the relative difference between the two in Figure~\ref{fig:euc_dist_perturb}. For 6/8 models, in-constituent perturbations result in larger $L_2$ changes than out-of-constituent perturbations (statistically significant according to a two-sided $t$-test, $p < 10^{-4}$). Meanwhile, when constituents are chosen according to random trees, changes resulting from both perturbations are similar. Overall, this suggests that \emph{induced trees reflect the contextual dependence structure learnt by a transformer}.

\begin{figure}[t!]
  \centering
  \includegraphics[width=0.5\linewidth]{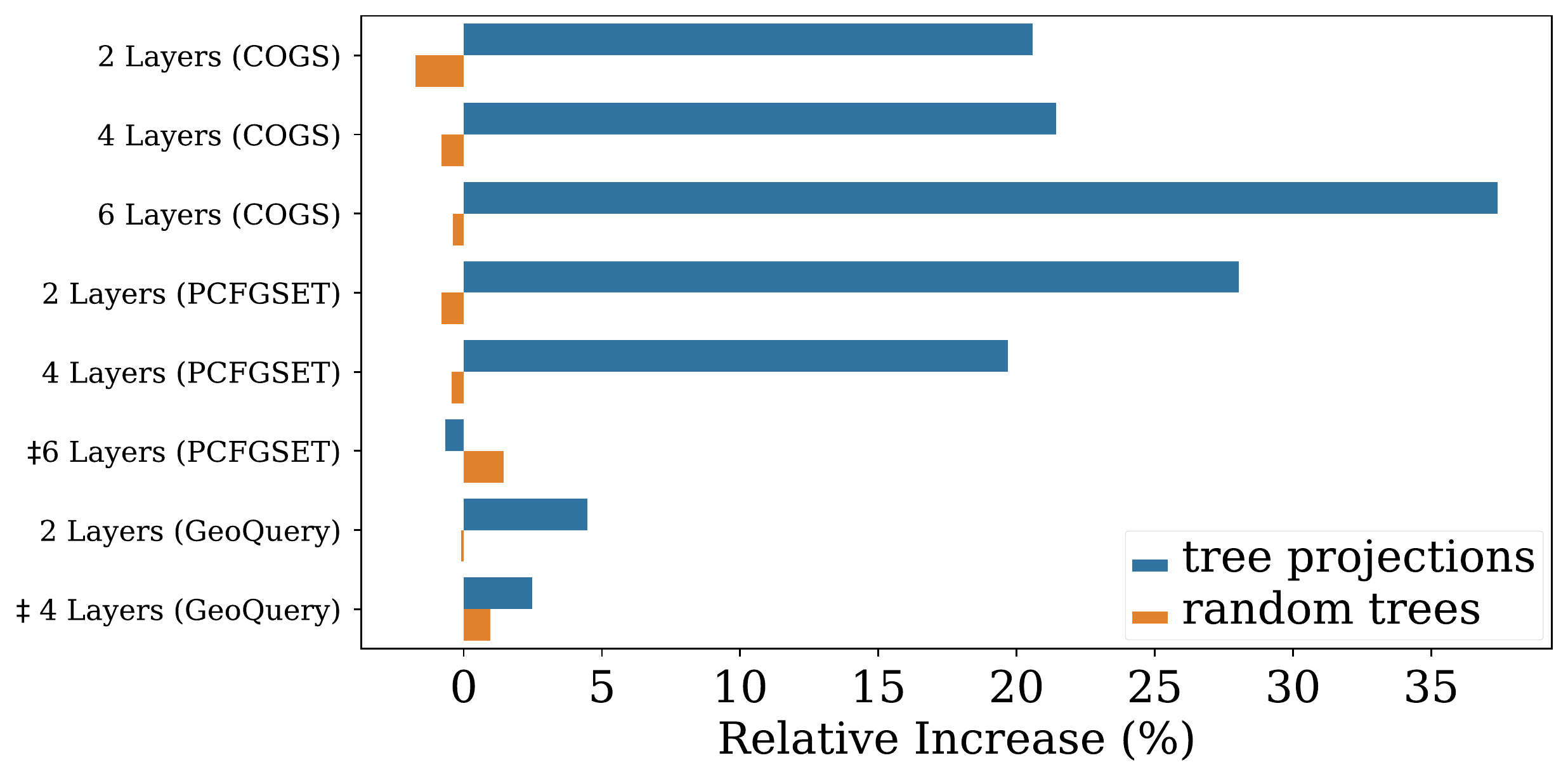}
  \caption{We measure the mean $L_2$ distance in the contextual vector of words when \emph{in-constituent} and \emph{out-of-constituent} words are perturbed. We plot the relative difference between $\Delta_{ic}$ and $\Delta_{oc}$ when constituents are obtained from tree projections (in blue). As a control, we also compute $\Delta_{ic}$ and $\Delta_{oc}$ when constituents are chosen from random trees (in orange). For all models except those marked with $\ddag{}$, \emph{in-constituent} perturbations lead to significantly (as measured by a $t$-test, $p < 10^{-5}$) larger change to contextual vectors compared to \emph{out-of-constituent} perturbations.}
  \label{fig:euc_dist_perturb}
    \vspace{-1em}
\end{figure}

\subsection{tree-structuredness correlates better with generalization than in-domain accuracy}
\label{sec:exp_generalization}
We study the connection between compositionality and generalization for the 4 layer transformer encoder on COGS and GeoQuery \footnote{IID acc perfectly predicts generalization for M-PCFGSET so we omit it in these experiments}. On each dataset, we train the model with 5 different random seeds and collect checkpoints every 1000/500 iterations. For each checkpoint, we measure accuracy on the in-domain validation set (\textit{IID acc}) and accuracy on the out-of-domain compositional generalization set (\textit{CG acc}). Additionally, we also compute $\compp{}$ and $\compt{}$ for the encoders at each of these checkpoints. To measure the relationship between compositionality and generalization, we compute the spearman correlation between $\compp{}$ ($\compt{}$) and \textit{CG acc} and denote that as $\rho^{\text{CG}}_{\compp{}}$ ($\rho^{\text{CG}}_{\compt{}}$). As a comparison, we also compute the correlation between \textit{IID acc} and \textit{CG acc} ($\rho^{\text{CG}}_{\text{IID}}$).

\paragraph{Results.} We plot the relationship between various properties and generalization along with corresponding correlations in Figure~\ref{fig:comp_gen_corr}. In general, we expect both \textit{IID acc} and \textit{CG acc} to improve together over time, and so it is unsurprising to see that $\rho^{\text{CG}}_{\text{IID}} > 0$. Moreover, for COGS, both $\compp{}$ and $\compt{}$ increase over time, and so it is expected that both $\rho^{\text{CG}}_{\compp{}}$ and $\rho^{\text{CG}}_{\compt{}}$ are positive. Crucially, however, we find that both $\rho^{\text{CG}}_{\compp{}}$ and $\rho^{\text{CG}}_{\compt{}}$ are greater than $\rho^{\text{CG}}_{\text{IID}}$ on both COGS and GeoQuery. Thus, tree-like behavior ($\compt{}$) as well as the \emph{right} tree-like behavior ($\compp{}$) are better predictors of compositional generalization than in-domain accuracy. This result gives simple \emph{model selection} criteria to maximize CG accuracy in the absence of a compostional generalization test set (true for most practical scenarios)---given a collection of checkpoints with similar in-domain accuracies, choose the checkpoint with highest $\compt{}$ or $\compp{}$ (if syntactic annotations are available) to get the model with best generalization behavior, in expectation.

\begin{figure}
    \centering
    \subfloat[IID acc vs CG acc]{\includegraphics[width=0.25\linewidth]{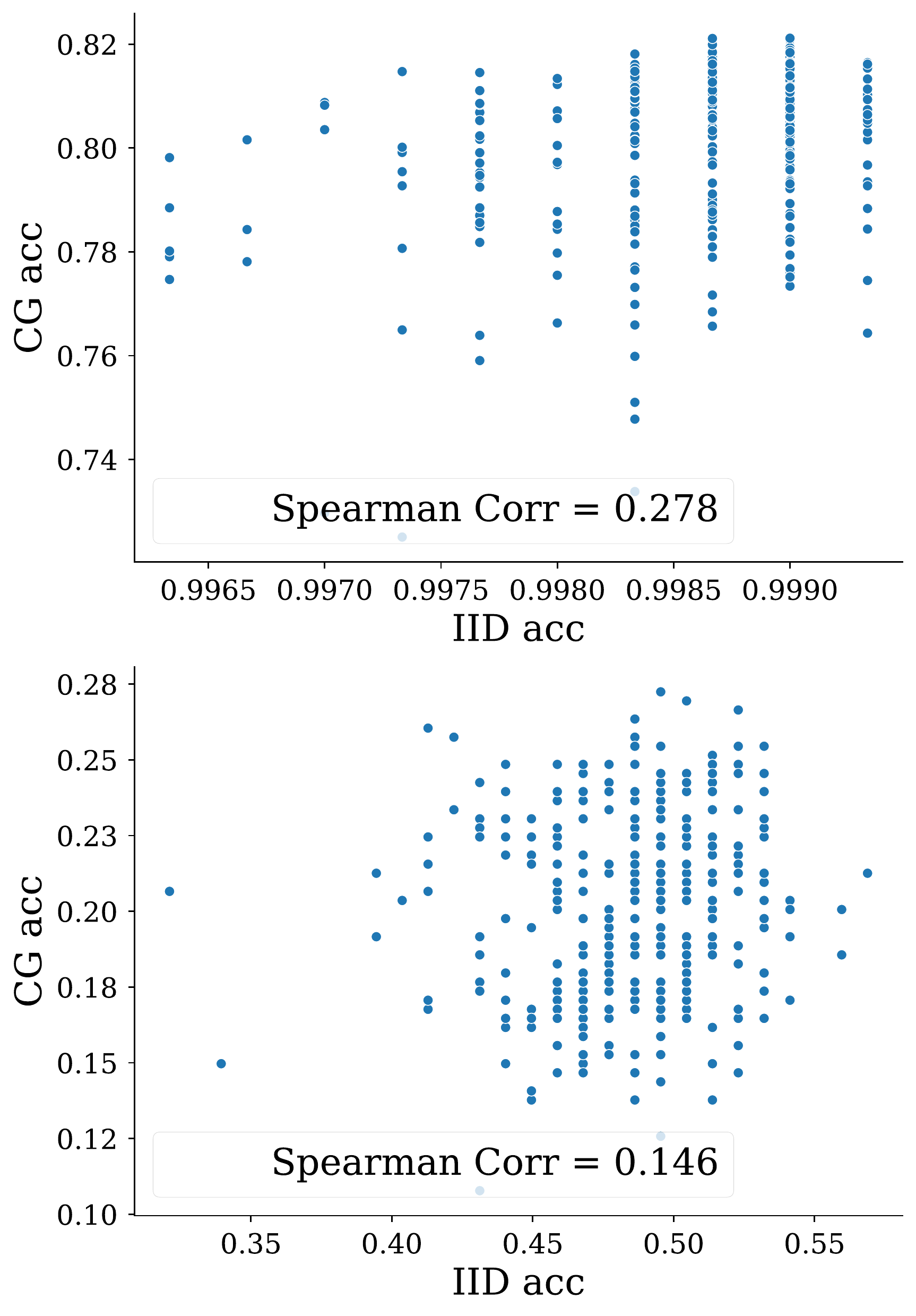}} \quad
    \subfloat[$\compt{}$ vs CG acc]{\includegraphics[width=0.25\linewidth]{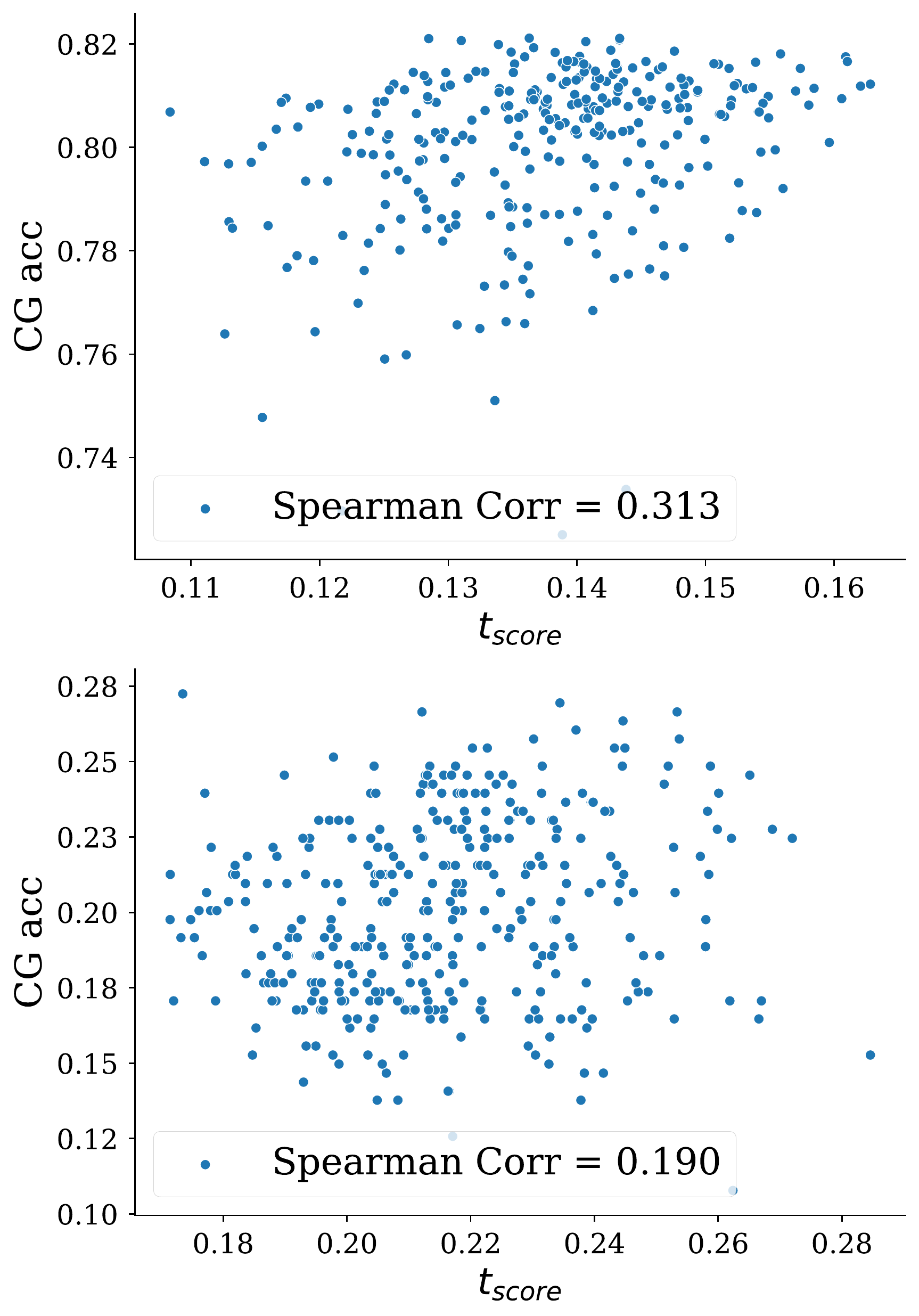}}\quad
    \subfloat[$\compp{}$ vs CG acc]{\includegraphics[width=0.25\linewidth]{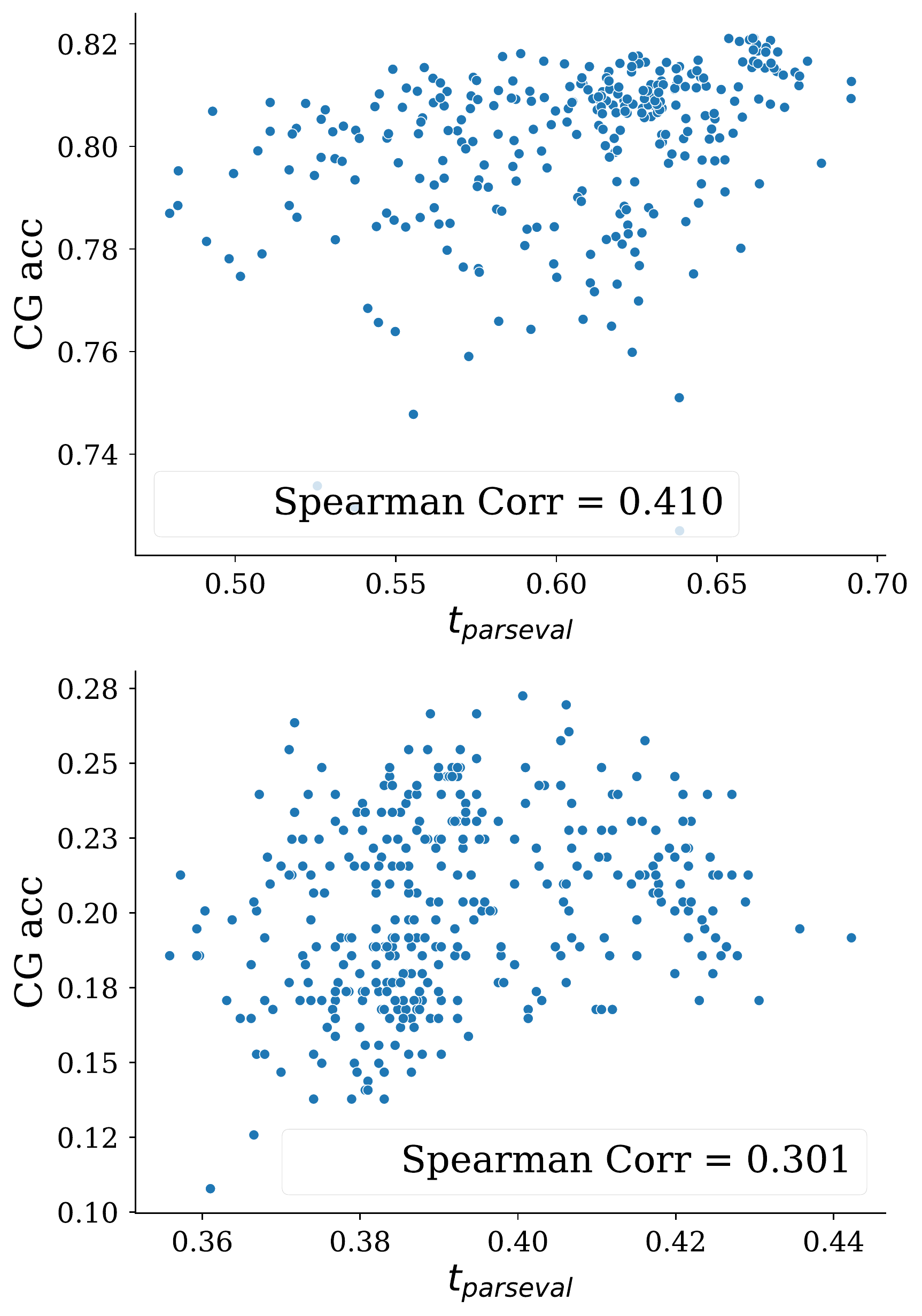}}

    \caption{We plot the spearman correlation between (a) \textit{IID acc} and \textit{CG acc}, (b) $\compt{}$ and \textit{CG acc}, (c) $\compp{}$ and \textit{CG acc}. 
    We find that both $\compp{}$ and $\compt{}$ correlate better with generalization than in-domain accuracy.  All correlations are statistically significant ($p$-values $< 10^{-3}$) }.
    \label{fig:comp_gen_corr}
        \vspace{-1.5em}

\end{figure}

\section{Related Work}
\paragraph{Measuring Linguistic Structure.} A common analysis tool for assessing a model's competence in a specific linguistic phenomenon is \emph{behavioral testing} \citep{linzen2016assessing, marvin2018targeted, ribeiro2020beyond}, where the model's performance on a curated test set is used as the measure of competence. Widely used in prior work to assess compositionality of neural models \citep{lake2018generalization, bahdanau2019systematic, yu2020assessing}, behavioral tests are inherently \emph{extrinsic}, since they are agnostic to whether the model implements an appropriately constrained, tree-like computation. 
While most prior approaches for assessing intrinsic compositionality \citep{andreas2019measuring, mccoy2019rnns} require putatively gold syntax trees, our proposed approach does not require any pre-determined ground truth syntax, since we search over the space of \emph{all} possible trees to find the best tree structure that approximates a transformer's computation.

\paragraph{Tree-structured Neural Networks.} Inspired by the widely accepted belief that natural language is mostly tree-structured \citep{chomsky1957syntax}, there have been several attempts to construct tree shaped neural networks for various NLP tasks, such as Recursive Neural Networks \citep{socher2013recursive}, Tree RNNs \citep{tai2015improved}, Recurrent Neural Network Grammars \citep{dyer2016recurrent}, Neural Module Networks \citep{andreas2016neural}, Ordered Neuron \citep{shen2018ordered} among others. These approaches have largely been superseded by transformers \citep{vaswani2017attention}, often pre-trained on a large corpus of text (\citet{devlin2019bert}, \textit{inter alia}). We show that transformers, though not explicitly tree-structured, may still learn to become tree-like when trained on language data.

\paragraph{Invariances and Generalization.} The general problem of studying model performance under domain shifts has been widely studied under domain generalization \citep{blanchard2011generalizing}. When domain shift is a result of changing feature covariates only, an effective strategy for domain generalization is to learn \emph{domain invariant representations} \citep{muandet2013invariant, ganin2016domain}. We apply the notion of domain invariance in the context of compositional generalization, and posit that models that produce span representations that are more contextually invariant can generalize better to inputs where the span appears in a novel context, which is precisely the motivation behind $\sci{}$.

\label{sec:related_work}
\section{Conclusion}

When trained on language data, how can we know whether a transformer learns a compositional, tree structured computation hypothesized to underlie human language processing? While extrinsic behavioral tests only assess if the model is capable of the same generalization capabilities as those expected from tree-structured models, this work proposes an \emph{intrinsic} approach that directly estimates how well a parametric tree-structured computation approximates the model's computation. Our method is unsupervised and parameter-free and provably upper bounds the representation building process of a transformer with any tree-structured neural network, effectively providing a \emph{functional projection} of the transformer into the space of all tree structured models. The central conceptual notion in our method is \emph{span contextual invariance} ($\sci{}$) that measures how much the contextual representation of a span depends on the context of the span vs. the content of the span.  $\sci{}$ scores of all spans are plugged into a standard top-down greedy parsing algorithm to induce a binary tree along with a corresponding tree score. From experiments, we show that tree projections uncover interesting training dynamics that a supervised probe is unable to discover---we find that on 3 sequence transduction tasks, transformer encoders tend to become \emph{more tree-like} over the course of training, with tree projections that become \emph{progressively closer to true syntactic derivations} on 2/3 datasets. We also find that tree-structuredness as well as parsing F1 of tree projections is a better predictor of generalization to a compositionally challenging test set than in-domain accuracy i.e., given a collection of models with similar in-domain accuracies, select the model that is most tree-like for best compositional generalization. Overall, our results suggest that making further progress on human-like compositional generalization might require inductive biases that encourage the emergence of latent tree-like structure.

\bibliography{iclr2023_conference}
\bibliographystyle{iclr2023_conference}

\appendix
\section{Proofs}
\label{sec:appendix_tight}

\begin{lemma}
\label{lemm:1}
$\err(f, g_{\phi^{*}}, T) \leq \sum_{S \in \mathcal{D}} \sci{}(S, T(S))$
\end{lemma}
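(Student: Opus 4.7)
The plan is to exhibit an explicit choice of parameters $\phi$ for the tree-structured network $g_\phi$ (with the same tree $T$) whose approximation error equals $\sum_{S \in \mathcal{D}} \sci(S, T(S))$, and then invoke the optimality of $\phi^{*}$ to obtain the inequality. The key conceptual point is that the context-free representations $\tilde{\vv}_\spn$ depend only on the content of the span $\spn$ and not on the surrounding sentence $S$, so they are exactly the kind of quantities that a bottom-up tree-structured network can emulate.

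Concretely, I would first define $\phi = \{\theta, \eta_w\}$ as follows. For each single-word span $w$, set $\eta_w \triangleq \tilde{\vv}_w$. For each internal span $\spn$ that appears in $T(S)$ for some $S \in \mathcal{D}$, with children $\spn_l$ and $\spn_r$ in $T(S)$, I want the combiner $h_\theta$ to satisfy $h_\theta(\tilde{\vv}_{\spn_l}, \tilde{\vv}_{\spn_r}) = \tilde{\vv}_\spn$. Because $\tilde{\vv}_{\spn}$ is a function of $\spn$ alone, this target specification is a well-defined partial function from pairs of child representations to parent representations (after embedding the spans into a sufficiently high-dimensional space to avoid collisions between distinct spans having identical child pairs). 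Since $\mathcal{D}$ is finite, only finitely many such pairs arise, and a standard universal-approximation argument guarantees the existence of a $\theta$ for which $h_\theta$ realizes this finite lookup table. A simple induction on tree depth then shows that $g_\phi(\spn, T(\spn)) = \tilde{\vv}_\spn$ for every span $\spn \in T(S)$ and every $S \in \mathcal{D}$.

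With this $\phi$ in hand, substituting into the definition of the approximation error gives
\begin{equation*}
\err(f, g_\phi, T) = \sum_{S \in \mathcal{D}} \sum_{\spn \in T(S)} d\bigl(g_\phi(\spn, T(\spn)), \cvec{S}{\spn}\bigr) = \sum_{S \in \mathcal{D}} \sum_{\spn \in T(S)} d\bigl(\tilde{\vv}_\spn, \cvec{S}{\spn}\bigr) = \sum_{S \in \mathcal{D}} \sci(S, T(S)).
\end{equation*}
Finally, by the definition of $\phi^{*}$ as the minimizer of $\err(f, g_{\phi'}, T)$ over $\phi'$, we have $\err(f, g_{\phi^{*}}, T) \leq \err(f, g_\phi, T)$, which together with the displayed equation yields the lemma.

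The main obstacle is justifying the existence of a $\theta$ realizing the required input-output pairs for $h_\theta$. This is essentially a universal-approximation statement about the feedforward class, which is standard provided that $h_\theta$ is taken to be expressive enough (e.g., a sufficiently wide MLP); it should be stated explicitly as a mild assumption on the tree-network family so that the constructed $\phi$ is guaranteed to live in the optimization domain. Everything else is bookkeeping with the definitions of $\err$ and $\sci$.
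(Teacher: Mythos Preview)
Your proposal is correct and follows essentially the same approach as the paper: exhibit parameters $\hat{\phi}$ for which $g_{\hat{\phi}}(\spn, T(\spn)) = \tilde{\vv}_\spn$ on every relevant span, observe that the resulting error equals $\sum_{S}\sci(S,T(S))$, and conclude by optimality of $\phi^{*}$. The paper is terser---it simply asserts that such a $\hat{\phi}$ ``always exists for large enough $p$''---whereas you spell out the leaf embeddings, the combiner lookup table, the universal-approximation justification, and the possible collision issue; this extra care is warranted and makes explicit the mild expressivity assumption that the paper leaves implicit.
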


\begin{proof}

Let $l(f, g_\phi, S, T) \triangleq \sum_{s \in \tree{S}} d(g_{\phi}(s, T(s)), \cvec{S}{s})$ for any $S \in \mathcal{D}$, where $g$ is a tree-structured network indexed by $\phi \in \mathbb{R}^{p}$. The overall error of $g_{\phi}$ on $\mathcal{D}$ is

\begin{align}
 \err(f, g_\phi, T) = \sum_{S \in \mathcal{D}} l(f, g_\phi, S, T).
\end{align}

Let $\phi^{*}  \triangleq \argmin_{\phi} \err(f, g_\phi, T)$. Next, consider $\hat{\phi} \in \mathbb{R}^{p}$ such that $g_{\hat{\phi}}(s, T(s)) = \tilde{\vv}_s$ for all $s \in \mathcal{D}$. Such a $\hat{\phi}$ always exists for large enough $p$, since there exists a unique $\tilde{\vv}_s$ for any $\spn$ given $\mathcal{D}$ and $f$. Clearly, $l(f, g_{\hat{\phi}}, S, T) = \sum_{s \in \tree{S}} d(\cvec{S}{s}, \tilde{\vv}_s)$. By definition, we have 

\begin{align}
\err(f, g_{\phi^{*}}, T) &\leq  \err(f, g_{\hat{\phi}}, T) \\
&= \sum_{S \in \mathcal{D}} \sum_{s \in \tree{S}} d(\cvec{S}{s}, \tilde{\vv}_s) = \sum_{S \in \mathcal{D}} \sci{}(S, \tree{S})  .\label{eq:ub}  
\end{align}

\end{proof}

\upperboundall*

\begin{proof}

We have
\begin{align}
 \min_{\phi, T} \mathcal{L}(f, g_\phi, T) = \min_{T} \min_{\phi} \mathcal{L}(f, g_{\phi}, T)   
\end{align}

For any given $T$, we have $\min_{\phi} \mathcal{L}(f, g_{\phi}, T) \leq \sum_{S \in \mathcal{D}} \sci{}(S, T(S))$. Thus minimizing both sides with respect to $T$, we have
\begin{align}
 \min_{T} \min_{\phi} \mathcal{L}(f, g_{\phi}, T) &\leq \min_{T} \sum_{S \in \mathcal{D}} \sci{}(S, T(S)) \\
 &= \sum_{S \in \mathcal{D}} \min_{T(S)} \sci{}(S, T(S))
\end{align}
\end{proof}

Under Assumption~\ref{assump:1} and Theorem~\ref{thm:1}, we have the proof for Corollary~\ref{corr:1} which we present next.

\equalcorr*

\begin{proof}
Let $s_T$ be the collection of all spans that occur as a constituent for some $\tree{S}$ where $S \in \mathcal{D}$. We have
\begin{align}
    \err(f, g_\phi, T) &= \sum_{S \in \mathcal{D}} \sum_{s \in \tree{S}} d(g_{\phi}(s, T(s)), \cvec{S}{s}) \\
    &= \sum_{s \in s_T} \sum_{S \in S_s} d(g_{\phi}(s, T(s)), \cvec{S}{s}).
\end{align}

Now, using Assumption~\ref{assump:1}, we note that

\begin{align}
 \sum_{S \in S_s}d(g_{\phi}(s, T(s)), \cvec{S}{s}) \geq \min_{\vv} \sum_{S \in S_s} d(\vv, \cvec{S}{s}) = \sum_{S \in S_s} d(\cfvec{s}, \cvec{S}{s}).\label{eq:lb} 
\end{align}

Combining Equation~\ref{eq:lb} and Lemma~\ref{lemm:1}, we have 
\begin{align}
    \min_{\phi} \mathcal{L}(f, g_{\phi}, T) = \sum_{S \in \mathcal{D}} \sci{}(S, T(S))
\end{align}

Now, we have 

\begin{align}
T_{\text{proj}} = \argmin_{T} \big[ \min_{\phi} \mathcal{L}(f, g_{\phi}, T) \big] = \argmin_{T} \sum_{S \in \mathcal{D}} \sci{}(S, T(S))    
\end{align}
Thus, $T_{\text{proj}}(S) = \argmin_{T(S)} \sci{}(S, T(S))$ 
\end{proof}

Next, we consider specific examples of distance metric $d$, and what Assumption~\ref{assump:1} implies for context-free vectors $\tilde{\vv}_s$.

\begin{exmp}
Suppose $d$ is the euclidean $L_2$ distance i.e., $d(\vx, \vy) = \lVert \vx - \vy \rVert$. Then, Assumption~\ref{assump:1} requires that $\tilde{\vv}_s =\frac{1}{|Ss|}\sum_{S \in S_s} \cvec{S}{s}$
\end{exmp}  

\begin{sproof}
We have $\vv^{*}_s = \argmin_{v} \sum_{S \in S_s} d(\cvec{S}{s}, \vv) = \argmin_{v} \sum_{S \in S_s} \| \vv - \cvec{S}{s} \|$. Setting derivatives with respect to $\vv$ to 0, we have $\vv^{*}_s = \frac{1}{|S_s|}\sum_{S \in S_s} \cvec{S}{s}$
\end{sproof}

\begin{exmp}
Let $d$ be the cosine distance of $\vx$ and $\vy$ i.e., $d(\vx, \vy) = 1 -\cdist{\vx}{\vy}$. Then, Assumption~\ref{assump:1} requires that $\tilde{\vv}_s = \frac{1}{|S_s|}\sum_{S \in S_s} \frac{\cvec{S}{s}}{\norm{\cvec{S}{s}}}$
\end{exmp}

\begin{sproof}
We have $\vv^{*}_s = \argmin_{\vv} \sum_{S \in S_s}d(\cvec{S}{s}, \vv) = \argmax_{\vv} \sum_{S \in S_s} \cdist{\vv}{\cvec{S}{s}} = \argmax_{\vv} \frac{\vv^{\top}}{\norm{\vv}} \big(\sum_{S \in S_s} \frac{\cvec{S}{s}}{\norm{\cvec{S}{s}}}\big)$. Thus, $\vv^{*}_s = k \sum_{S \in S_s} \frac{\cvec{S}{s}}{\|\cvec{S}{s}\|}$ for any $k > 0$  
\end{sproof}

\begin{figure}[htbp]
  \centering
  \includegraphics[width=0.9\linewidth]{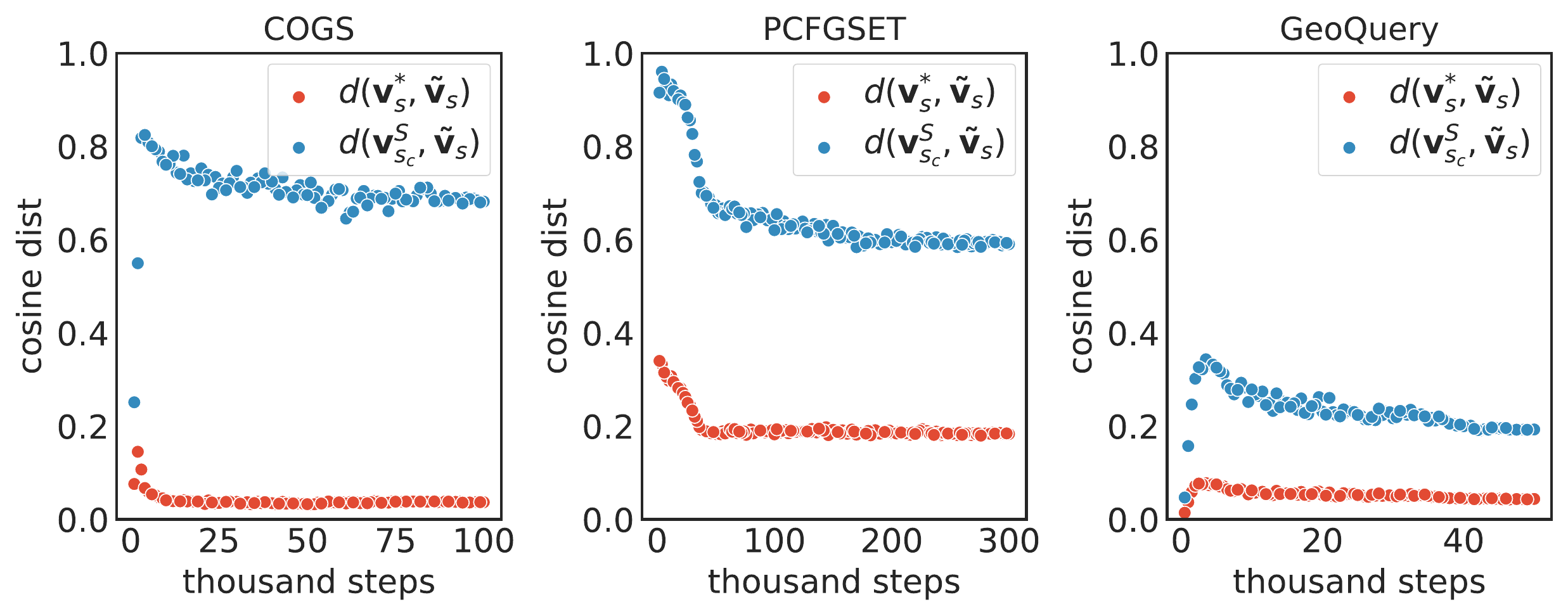}
  \caption{We plot $d(\vv^{*}_s, \tilde{\vv}_s)$ for randomly sampled spans at various points during training. As a control, we also plot $d(\cvec{S}{s_c}, \tilde{\vv}_s)$ for a random span $s_c$. We observe that for COGS and GeoQuery, the distance between the optimal $\vv^{*}_s$ and $\tilde{\vv}_s$ eventually becomes less than 0.05. We conclude that the conditions of Assumption~\ref{assump:1} approximately hold true for 2/3 datasets. }
  \label{fig:dist_opt_vs_ours}
\end{figure}

\section{Dataset Preprocessing}
\label{sec:appendix_dataset}

Dataset statistics are in Table~\ref{tab:dat_stats}.

\paragraph{COGS.} We use the standard train, validation and test splits provided by \citet{kim2020cogs}, where we use the ``gen'' split as our test set. The validation set is drawn from the same distribution as the training data, while the test set consists of compositionally challenging input sentences. 

\paragraph{M-PCFGSET.} We make two modifications to the PCFGSET dataset. First, we remove commas from expressions so that the model is forced to implictly learn to correctly partition the input expression for a correct intrepretation. To ensure that a unique parse exists even without commas, we additionally ensure that all lists have exactly 2 elements. For instance, the expression \texttt{append A B C, D E F} is modified into \texttt{append A B E F} that has the unique interpretation \texttt{append([A, B], [E, F])} since all lists have exactly 2 elements. Second, we replace the \texttt{remove\_first} and \texttt{remove\_second} operations with \texttt{interleave\_first} and \texttt{interleave\_second}, where the \texttt{interleave} operation takes two lists (say \emph{A B} and \emph{C D}) and interleaves them to either produce \emph{A C B D} or \emph{C A D B}. This modification ensures that intermediate constituents in the expression are not discarded, similar to how intermediate constituents are almost never discarded in natural language utterances.

\paragraph{GeoQuery.} We use the pre-processed JSON files corresponding to the query split from \citep{finegan-dollak2018improving}. We create an 80/20 split of the original training data, to create an IID validation set.

\begin{table}
\centering
\tiny
\begin{tabular}{@{}lccc@{}} \toprule
\textbf{Dataset}       & Train & Validation & Test \\ \midrule
COGS & 24115 & 3000 & 21000 \\ 
M-PCFGSET & 65734 & 16434 & 10175 \\
GeoQuery & 434 & 109 & 334 \\
\bottomrule
\end{tabular}
\caption{Dataset Statistics}
\label{tab:dat_stats}
\end{table}

\section{Functional vs. Topological tree-structuredness}
\label{sec:appendix_topological}

We emphasize that our approach finds a functional tree approximation to a transformer, and not a topological one. That is, we fit a separate, tree structured neural network to vector representations from a transformer, instead of decoding a tree-structure from the attention patterns. As a result, our definition of tree-structuredness does not restrict the transformer's attention pattern to be necessarily tree structured (see Figure~\ref{fig:sci_topology} for examples).

\begin{figure}[t!]
  \centering
  \includegraphics[width=0.5\linewidth]{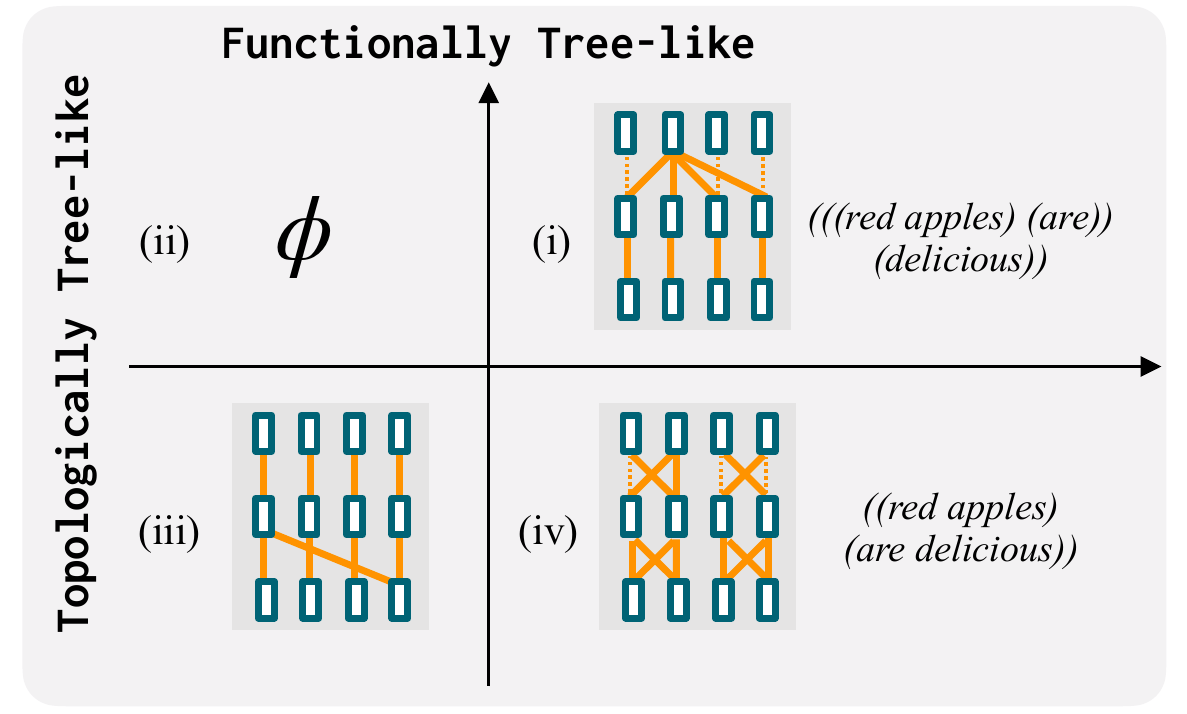}
  \caption{We show 3 instances of computations implemented by a transformer on the input \emph{red apples are delicious} along with tree projections our method outputs for each instance. We divide the space of possibilities into 4 quadrants. In quadrant-(i), we show an instance that is both topologically as well as functionally tree-like. quadrant-(ii) is empty, since no transformer can be topologically tree-like but not a good functional approximation to a tree. In quadrant-(iii) we show a transformer that is either topologically nor functionally tree-like. Finally, in quadrant-(iv), we show a transformer that is functionally tree-like but does not resemble a tree structure topologically.}
  \label{fig:sci_topology}
\end{figure}

\section{Analyzing induced tree structures}
\label{sec:appendix_parsing_acc}

We choose the checkpoint with best bracketing F1 score on the training split for all our datasets, and compute corresponding bracketing F1 scores on the IID validation set in Table~\ref{tab:parsing}. As a baseline, we compare with standard constituency parsing baselines: $\lbranch{}$ (choosing a completely left branching tree), $\rbranch{}$ (choosing a completely right branching tree) and $\random{}$ (choosing a random binary tree).  Interestingly, we find that the trees discovered by our approach on COGS beats $\rbranch{}$, which is a competitive constituency parsing baseline for English.

\begin{table}
\centering
\tiny
\begin{tabular}{@{}lccc@{}} \toprule
\textbf{Method}       & COGS & M-PCFGSET & GeoQuery \\ \midrule
\textbf{$\ours{}$}    & \textbf{75.6} & 46.5 & 48.0\\ 
\textbf{$\random{}$}  & 44.9 & 26.8 & 41.0 \\
\textbf{$\lbranch{}$} & 30.2 & 17.9 & 26.0  \\
\textbf{$\rbranch{}$} & 75.2 & \textbf{58.1} & \textbf{69.7} \\ \bottomrule
\end{tabular}
\caption{Parsing accuracies}
\label{tab:parsing}
\end{table}

\begin{algorithm}
\caption{Tree Projections via greedy $\mathrm{SCI}$ minimization}\label{alg:ours}
\begin{algorithmic}[1]
\Function{TreeProjection}{$S, f$}
   \State\Return{\Call{TreeProjectionRecurse}{$S, f, 1, |S|$}} 
\EndFunction
\Statex
\Function{TreeProjectionRecurse}{$S, f, i, j$} 
  \If{$i = j$}
  \algorithmiccomment{leaf node}
    \State\Return{$w_i$, 0};
  \Else
  \algorithmiccomment{greedily select split point to minimize $\sci{}$ of resulting constituents}
    
    \State $k^{*} \gets \argmin_{k \in [i, j)}[\sci{}(S_{i, k}) + \sci{}(S_{k+1, j})]$;
    \State $s_{k^{*}} \gets \sci{}(S_{i, k^{*}}) + \sci{}(S_{k^{*}+1, j})$;
      \algorithmiccomment{select a random split point for normalization}
    \State $s_b \gets \sci{}(S_{i, k_b}) + \sci{}(S_{k_b+1, j}), k_b \sim U[i, j-1]$;
    \algorithmiccomment{Recursively call the function to get a tree structure and score for left span}
    \State $S_l, ts_l \gets \Call{TreeProjectionRecurse}{S, f, i, k^{*}}$; 
    \algorithmiccomment{Recursively call the function to get a tree structure and score for the right span}
    \State $S_r, ts_r \gets \Call{TreeProjectionRecurse}{S, f, k^{*}+1, j}$;
    \State \Return{$\langle S_l, S_r\rangle$, $s_b- s_{k^{*}} + ts_l + ts_r$}
  \EndIf
\EndFunction

\end{algorithmic}
\end{algorithm}

\end{document}